\newcommand{\cX}{\mathcal{X}}
\newcommand{\cS}{\mathcal{S}}
\newcommand{\cA}{\mathcal{A}}
\newcommand{\cP}{\mathcal{P}}
\newcommand{\cT}{\mathcal{T}}
\newcommand{\cV}{\mathcal{V}}
\newcommand{\cJ}{\mathcal{J}}
\newcommand{\cC}{\mathcal{C}}
\newcommand{\bR}{\mathbb{R}}
\newcommand{\bP}{\mathbb{P}}
\newcommand{\indic}[1]{\mathbb{I}_{\left [ #1 \right ]}}
\DeclareMathOperator*{\expect}{{\huge \mathbb{E}}}
\newcommand{\norm}[1]{\big \| #1 \big \|}
\newtheorem{prop}{Proposition}
\newtheorem{lem}{Lemma}
\newtheorem{cor}{Corollary}
\newtheorem{thm}{Theorem}
\newcommand{\cbar}{\, | \,}
\DeclareMathOperator*{\argmin}{arg\,min}
\DeclareMathOperator*{\argmax}{arg\,max}
\newcommand{\eqnref}[1]{(\ref{eqn:#1})}
\title{A Geometric Perspective on Optimal Representations for Reinforcement Learning}
\author{
  Marc G. Bellemare$^1$, Will Dabney$^2$, Robert Dadashi$^1$, Adrien Ali Taiga$^{1,3}$,\\
  \textbf{Pablo Samuel Castro$^1$, Nicolas Le Roux$^1$, Dale Schuurmans$^{1,4}$, Tor Lattimore$^2$, Clare Lyle$^5$}
}
\def\blfootnote{\gdef\@thefnmark{}\@footnotetext}
\begin{document}

\maketitle

\def \Vpi {{V^\pi}}
\def \Vmu {{V^\mu}}
\def \hVpi {\hat{V}^\pi}
\def \hVphipi {\hat{V}_\phi^\pi}
\def \hVphimu {\hat{V}_\phi^\mu}
\def \hVphimui{\hat{V}_\phi^{\mu_i}}
\def \Piphi {\Pi_{\phi}}
\def \Phispace {\bR^{n \times d}}
\def \thetapi {\theta_\pi}

\def \rlp {\textsc{rlp}}

\def \SetPhi {\mathscr{R}}
\def \Conv {\textsc{Conv}}

\def \SetPi {\cP}

\def \Ppi {{P^\pi}}
\def \bPd {\bP_d}
\def \bPv {\bP_v}
\def \cVd {\cV_d}
\def \cVv {\cV_v}

\def \bmu {\bm{\mu}}
\def \bbV {\bm{V}}
\def \dpi {d_{\pi}}
\def \tpi {\tilde \pi}
\def \Vtpi {V^{\tilde \pi}}

\def \dSet {\mathscr{P}}
\def \hull {\textsc{hull}}

\begin{abstract}
We propose a new perspective on representation learning in reinforcement learning based on geometric properties of the space of value functions. We leverage this perspective to provide formal evidence regarding the usefulness of value functions as auxiliary tasks.
Our formulation considers adapting the representation to minimize the (linear) approximation of the value function of all stationary policies for a given environment. We show that this optimization reduces to making accurate predictions regarding a special class of value functions which we call \emph{adversarial value functions} (AVFs). 
We demonstrate that using value functions as auxiliary tasks corresponds to an expected-error relaxation of our formulation, with AVFs a natural candidate, and identify a close relationship with proto-value functions (Mahadevan, 2005). We highlight characteristics of AVFs and their usefulness as auxiliary tasks in a series of experiments on the four-room domain.
\end{abstract}

\section{Introduction}

\blfootnote{$^1$Google Brain $^2$DeepMind $^3$Mila, Universit\'e de Montr\'eal $^4$University of Alberta $^5$University of Oxford}
A good representation of state is key to practical success in reinforcement learning.
While early applications used hand-engineered features \citep{samuel59some}, these have proven onerous to generate and difficult to scale.
As a result, methods in representation learning have flourished, ranging from basis adaptation \citep{menache05basis,keller06automatic}, gradient-based learning \citep{yu09basis}, proto-value functions \citep{mahadevan07proto}, feature generation schemes such as tile coding \citep{sutton96generalization} and the domain-independent features used in some Atari 2600 game-playing agents \citep{bellemare13arcade,liang16state}, and nonparametric methods \citep{ernst05treebased,farahmand16regularized,tosatto17boosted}.
Today, the method of choice is deep learning. Deep learning has made its mark by showing it can learn complex representations of relatively unprocessed inputs using gradient-based optimization \citep{tesauro95temporal,mnih15human,silver16mastering}.

Most current deep reinforcement learning methods augment their main objective with additional losses called \emph{auxiliary tasks}, typically with the aim of facilitating and regularizing the representation learning process. The UNREAL algorithm, for example, makes predictions about future pixel values \citep{jaderberg17reinforcement}; recent work approximates a one-step transition model to achieve a similar effect \citep{francoislavet18combined,gelada19deepmdp}. The good empirical performance of distributional reinforcement learning \citep{bellemare17distributional} has also been attributed to representation learning effects, with recent visualizations supporting this claim \citep{such19atari}. However, while there is now conclusive empirical evidence of the usefulness of auxiliary tasks, their design and justification remain on the whole ad-hoc. One of our main contributions is to provides a formal framework in which to reason about auxiliary tasks in reinforcement learning.

We begin by formulating an optimization problem whose solution is a form of optimal representation. Specifically, we seek a state representation from which we can best approximate the value function of any stationary policy for a given Markov Decision Process. Simultaneously, the largest approximation error in that class serves as a measure of the quality of the representation. 
While our approach may appear naive -- in real settings, most policies are uninteresting and hence may distract the representation learning process -- we show
that our representation learning problem can in fact be restricted to a special subset of value functions which we call \emph{adversarial value functions} (AVFs).
We then characterize these adversarial value functions and
show they correspond to deterministic policies that either minimize or maximize the expected return at each state, based on the solution of a network-flow optimization derived from an interest function $\delta$. 

A consequence of our work is to formalize why predicting value function-like objects is helpful in learning representations, as has been argued in the past \citep{sutton11horde,sutton16emphatic}. We show how using these predictions as auxiliary tasks can be interpreted as a relaxation of our optimization problem. From our analysis, we hypothesize that auxiliary tasks that resemble adversarial value functions should give rise to good representations in practice.
We complement our theoretical results with an empirical study in a simple grid world environment, focusing on the use of deep learning techniques to learn representations. We find that predicting adversarial value functions as auxiliary tasks leads to rich representations. %

\section{Setting}

We consider an environment described by a Markov Decision Process $\langle \cX, \cA, r, P, \gamma \rangle$ \citep{puterman94markov}; $\cX$ and $\cA$ are finite state and action spaces, $P : \cX \times \cA \to \dSet(\cX)$ is the transition function, $\gamma$ the discount factor, and $r : \cX \to \bR$ the reward function.
For a finite set $\cS$, write $\dSet(\cS)$ for the probability simplex over $\cS$.
A (stationary) \emph{policy} $\pi$ is a mapping $\cX \to \dSet(\cA)$, also denoted $\pi(a \cbar x)$. %
We denote the set of policies by $\SetPi = \dSet(\cA)^\cX$.
We combine a policy $\pi$ with the transition function $P$ to obtain the state-to-state transition function $\Ppi(x' \cbar x) := \sum_{a \in \cA} \pi(a \cbar x) P(x' \cbar x, a)$.
The \emph{value function} $\Vpi$ describes the expected discounted sum of rewards obtained by following $\pi$:
\begin{equation*}
\Vpi(x) = \expect \Big [ \sum_{t=0}^\infty \gamma^t r(x_t) \,\big |\, x_0 = x, x_{t+1} \sim \Ppi(\cdot \cbar x_t) \Big ].
\end{equation*}
The value function satisfies Bellman's equation \citep{bellman57dynamic}: $\Vpi(x) = r(x) + \gamma \expect_{\Ppi} \Vpi(x')$.
We will find it convenient to use vector notation: Assuming there are $n = |\cX|$ states, we view $r$ and $\Vpi$ as vectors in $\bR^n$ and $\Ppi \in \bR^{n \times n}$, yielding
\begin{equation*}
\Vpi = r + \gamma \Ppi \Vpi = (I - \gamma \Ppi)^{-1} r.
\end{equation*}

A \emph{$d$-dimensional representation} is a mapping $\phi : \cX \to \bR^d$; $\phi(x)$ is the \emph{feature vector} for state $x$. We write $\Phi \in \bR^{n \times d}$ to denote the matrix whose rows are $\phi(\cX)$, and with some abuse of notation denote the set of $d$-dimensional representations by $\SetPhi \equiv \bR^{n \times d}$.
For a given representation and weight vector $\theta \in \bR^d$, the linear approximation for a value function is
\begin{equation*}
\hat V_{\phi, \theta}(x) := \phi(x)^\top \theta .
\end{equation*}
We consider the approximation minimizing the uniformly weighted squared error
\begin{equation*}
\norm{\hat V_{\phi, \theta} - \Vpi}^2_2 = \sum_{x \in \cX} (\phi(x)^\top \theta - \Vpi(x))^2 .
\end{equation*}
We denote by $\hVphipi$ the projection of $\Vpi$ onto the linear subspace $H = \big \{ \Phi \theta : \theta \in \bR^d \big \}$.%

\subsection{Two-Part Approximation}

\begin{figure*}
\center{
\includegraphics[width=2.3in]{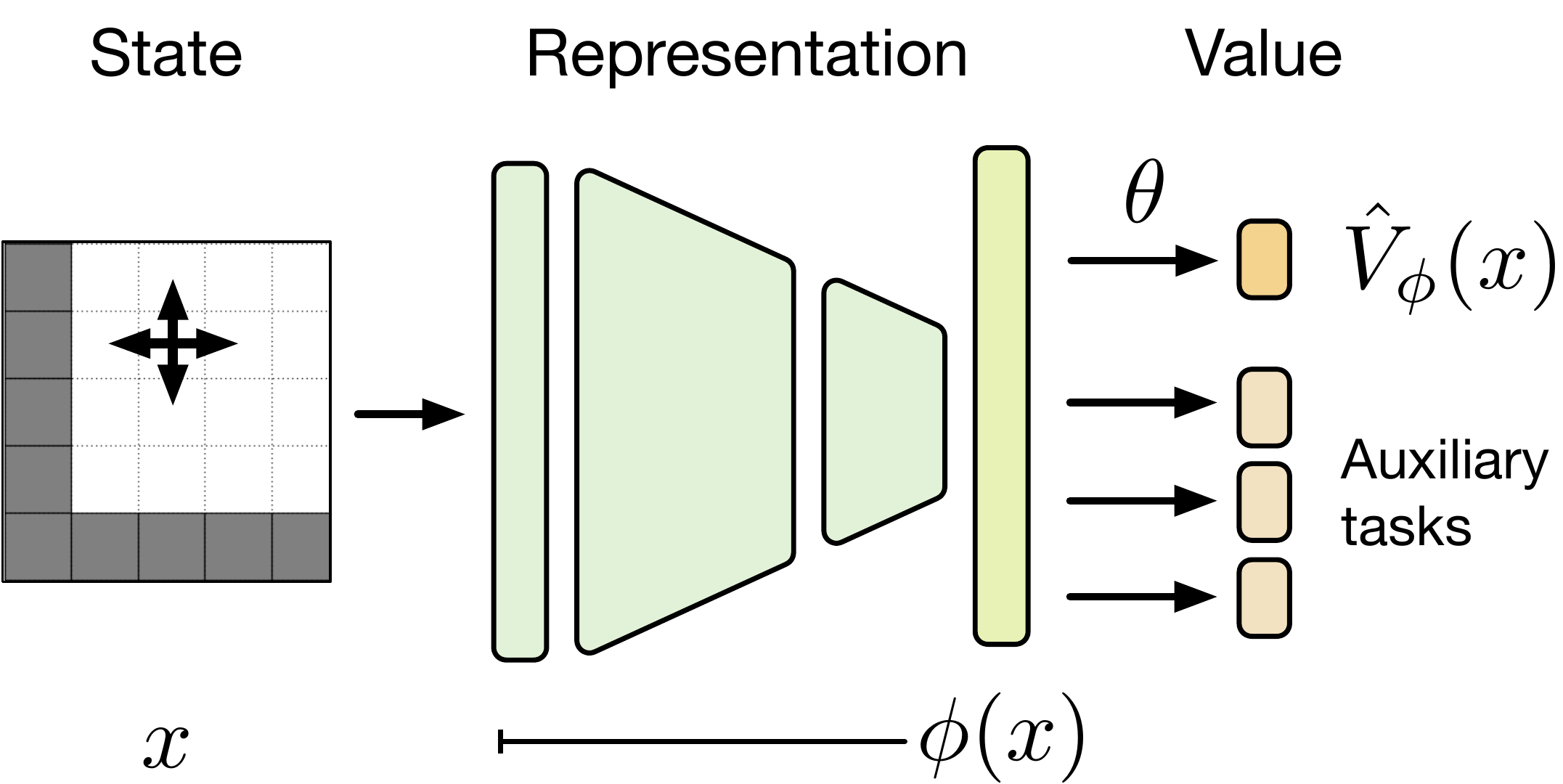}
\includegraphics[width=1.2in]{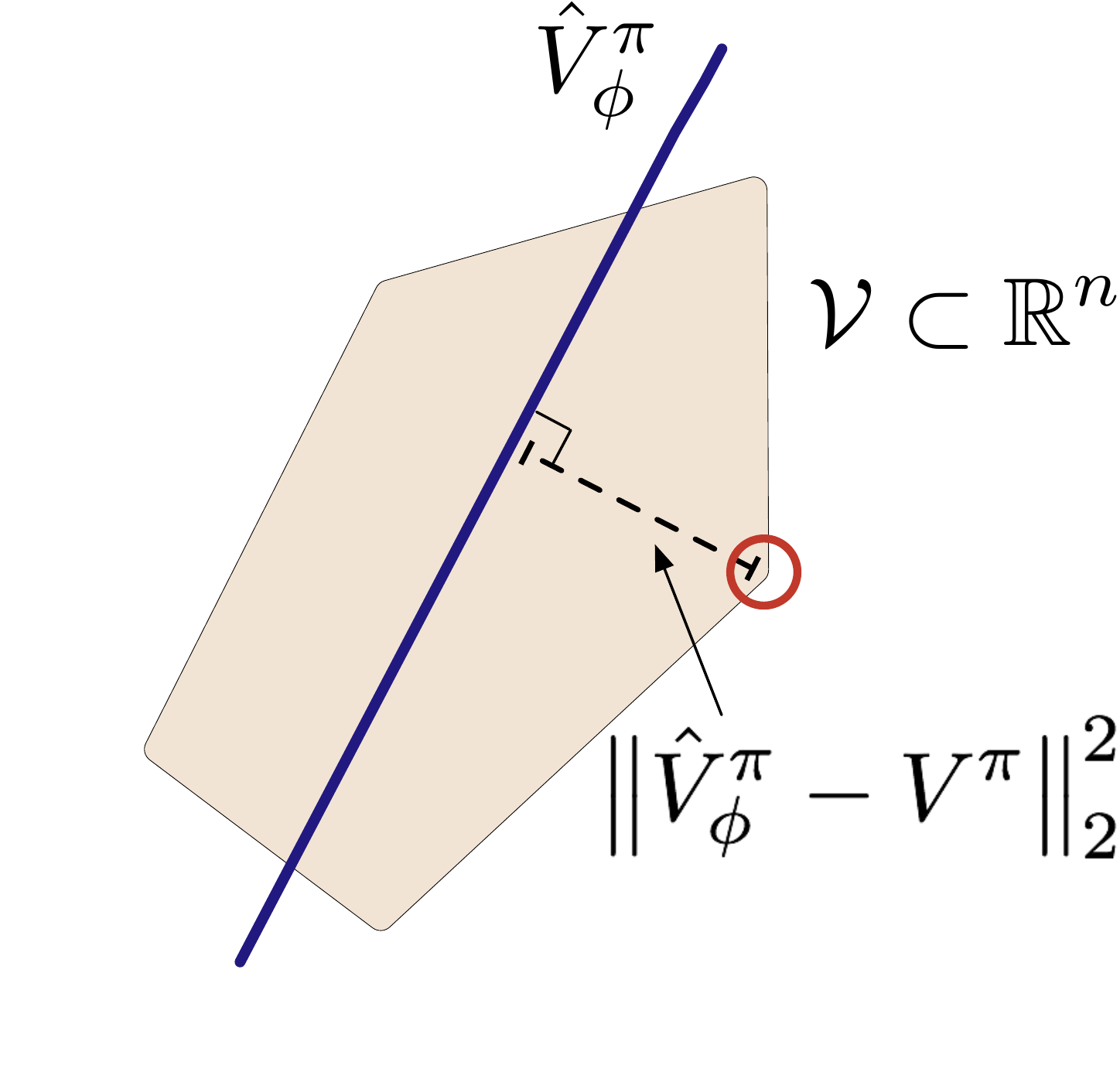}
\caption{\textbf{Left.} A deep reinforcement learning architecture viewed as a two-part approximation. \textbf{Right.}
The optimal representation $\phi^*$ is a linear subspace that cuts through the value polytope.\label{fig:geometric_perspective}}
}
\end{figure*}

We view $\hVphipi$ as a \emph{two-part approximation} arising from the composition of an adjustable representation $\phi$ and a weight vector $\theta$; we use the term ``two-part'' to emphasize that the mapping $\phi(x) \mapsto \hVphipi(x)$ is linear, while $\phi$ itself may not be. This separation into two parts gives us a simple framework in which to study the behaviour of representation learning, in particular deep networks applied to reinforcement learning. %
We will further consider the use of $\phi(x)$ to make additional predictions, called \emph{auxiliary tasks} following common usage, and whose purpose is to improve or stabilize the representation.

We study two-part approximations in an idealized setting where the length $d$ of $\phi(x)$ is fixed and smaller than $n$, but the mapping is otherwise unconstrained. Even this idealized design offers interesting problems to study. We might be interested in sharing a representation across problems, as is often done in transfer or continual learning. %
In this context, auxiliary tasks may inform how the value function should generalize to these new problems.
In many problems of interest, the weights $\theta$ can also be optimized more efficiently than the representation itself, warranting the view that the representation should be adapted using a different process \citep{levine17shallow,chung19twotimescale}.

Note that a trivial ``value-as-feature'' representation exists for the single-policy optimization problem
\begin{equation*}
\min_{\phi \in \SetPhi} \norm{\hVphipi - \Vpi}^2_2;
\end{equation*}
this approximation sets $\phi(x) = \Vpi(x), \theta = 1$. In this paper we take the stance that this is not a satisfying representation, and that a good representation should be in the service of a broader goal (e.g. control, transfer, or fairness).

\section{Representation Learning by Approximating Value Functions}

We measure the quality of a representation $\phi$ in terms of how well it can approximate all possible value functions, formalized as the \emph{representation error}
\begin{equation*}
L(\phi) := \max_{\pi \in \SetPi} L(\phi ; \pi), \quad L(\phi ; \pi) := \norm{\hVphipi - \Vpi}^2_2\, .
\end{equation*}
We consider the problem of finding the representation $\phi \in \SetPhi$ minimizing $L(\phi)$:
\begin{equation}\label{eqn:representation_learning_problem}
\min_{\phi \in \SetPhi} \max_{\pi \in \SetPi} \norm{\hVphipi - \Vpi}^2_2 .
\end{equation}
In the context of our work, we call this the \emph{representation learning problem} (\rlp) and say that a representation $\phi^*$ is \emph{optimal} when it minimizes the error in \eqnref{representation_learning_problem}.
Note that $L(\phi)$ (and hence $\phi^*$) depends on characteristics of the environment, in particular on both reward and transition functions.

We consider the \rlp~from a geometric perspective (Figure \ref{fig:geometric_perspective}, right). \citet{dadashi19value} showed  that the set of value functions achieved by the set of policies $\SetPi$, denoted
\begin{equation*}
\cV := \{ V \in \bR^n : V = \Vpi \text{ for some } \pi \in \SetPi \},
\end{equation*}
forms a (possibly nonconvex) polytope. 
As previously noted, a given representation $\phi$ defines a linear subspace $H$ of possible value approximations. 
The maximal error is achieved by the value function in $\cV$ which is furthest along the subspace normal to $H$, since $\hVphipi$ is the orthogonal projection of $\Vpi$.%

We say that $V \in \cV$ is an \emph{extremal vertex} if it is a vertex of the convex hull of $\cV$. Our first result shows that for any direction $\delta \in \bR^n$, the furthest point in $\cV$ along $\delta$ is an extremal vertex, and is in general unique for this $\delta$ (proof in the appendix).
\begin{lem}\label{lem:extremal_vertices}
Let $\delta \in \bR^n$ and define the functional $f_\delta(V) := \delta^\top V$, with domain $\cV$. Then $f_\delta$ is maximized by an extremal vertex $U \in \cV$, and there is a deterministic policy $\pi$ for which $V^\pi = U$. Furthermore, the set of directions $\delta \in \bR^n$ for which the maximum of $f_\delta$ is achieved by multiple extremal vertices has Lebesgue measure zero in $\bR^n$.
\end{lem}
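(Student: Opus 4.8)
The plan is to exploit the geometry of $\cV$ together with the polytope structure of the policy set $\SetPi = \dSet(\cA)^\cX$, whose vertices are exactly the deterministic policies. First I would record two preliminaries. Since $\pi \mapsto \Vpi = (I - \gamma \Ppi)^{-1} r$ is continuous and $\SetPi$ is compact, $\cV$ is compact, so $f_\delta(V) = \delta^\top V$ attains its maximum on $\cV$; and because $\max_{V \in \cV} \delta^\top V = \max_{V \in \Conv(\cV)} \delta^\top V$, this maximum is attained at an extreme point of the convex hull $\Conv(\cV)$, i.e. at an extremal vertex $U$, which necessarily lies in $\cV$. This already yields the first claim.

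For the second claim I would prove the key building block: if we fix $\pi$ at every state except one state $x_0$ and let $p := \pi(\cdot \cbar x_0)$ range over $\dSet(\cA)$, then $f_\delta(\Vpi)$ is a linear-fractional (ratio-of-affine) function of $p$. Only row $x_0$ of $\Ppi$ depends on $p$, and affinely, so writing $\Ppi = \tilde P + e_{x_0} w(p)^\top$ with $w(p) = \sum_a p_a P(\cdot\cbar x_0, a)$ linear in $p$, the Sherman--Morrison formula gives
$$\Vpi = B^{-1} r + \frac{\gamma (B^{-1} e_{x_0})\,(w(p)^\top B^{-1} r)}{1 - \gamma\, w(p)^\top B^{-1} e_{x_0}}, \qquad B := I - \gamma \tilde P,$$
where $B$ is invertible because $\tilde P$ is substochastic and $\gamma < 1$. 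Hence $f_\delta(\Vpi)$ is a ratio of two affine functions of $p$; the denominator never vanishes on $\dSet(\cA)$ (this is exactly invertibility of $I - \gamma \Ppi$, which holds for every stochastic $\Ppi$) and therefore has constant sign, so $f_\delta$ is quasiconvex in $p$ and attains its maximum at a vertex of $\dSet(\cA)$, i.e. at a deterministic choice at $x_0$. Starting from any global maximizer of $f_\delta$ over $\SetPi$ and applying this one state at a time (coordinate ascent), I can make the policy deterministic at every state without ever decreasing $f_\delta$, producing a deterministic global maximizer. To pin this to the extremal vertex $U$, I would use the standard fact that a vertex $U$ of the polytope $\Conv(\cV)$ is the unique maximizer of $f_{\delta'}$ for some direction $\delta'$; applying coordinate ascent to $\delta'$ yields a deterministic policy whose value equals that unique maximizer $U$. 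As a byproduct, every extremal vertex is the value of a deterministic policy, so there are at most $|\cA|^{|\cX|}$ of them, in particular finitely many.

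The step I expect to be the main obstacle is this second claim, precisely because $\delta$ is an arbitrary (signed) vector rather than a probability distribution, so the textbook argument that ``a single deterministic policy is simultaneously optimal at all states'' does not apply; the linear-fractional structure is what rescues the single-state reduction in the signed case, and care is needed that the Sherman--Morrison denominator keeps a constant sign and that coordinate ascent terminates at a genuinely deterministic policy.

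Finally, for the measure-zero claim I would use finiteness of the extremal vertices $U_1, \dots, U_k$ established above. The maximum of $f_\delta$ is achieved by two distinct extremal vertices $U_i \ne U_j$ only if $\delta^\top(U_i - U_j) = 0$, i.e. only if $\delta$ lies in the hyperplane orthogonal to the nonzero vector $U_i - U_j$. The set of ``bad'' directions is thus contained in the finite union $\bigcup_{i \ne j} \{ \delta \in \bR^n : \delta^\top (U_i - U_j) = 0 \}$ of hyperplanes, each of Lebesgue measure zero in $\bR^n$, and a finite union of measure-zero sets has measure zero.
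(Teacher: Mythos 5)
Your proposal is correct in substance, but it takes a genuinely different route from the paper. The paper's proof is short because it leans on Corollary 1 of \citet{dadashi19value}: $\Conv(\cV) = \Conv(V^{\pi_1}, \dots, V^{\pi_m})$ for a finite collection of \emph{deterministic} policies $\pi_1, \dots, \pi_m$. Granting that, the first claim (the linear program over $\Conv(\cV)$ is maximized at a vertex, which must be some $V^{\pi_i}$) and the measure-zero claim (bad directions lie in a finite union of hyperplanes $\{\delta : \delta^\top(V^{\pi_i} - V^{\pi_j}) = 0\}$) follow immediately; your measure-zero argument is essentially identical to the paper's. What you do differently is re-derive the content of the cited corollary from first principles: the Sherman--Morrison computation shows that, with the policy frozen at all states but one, $\delta^\top V^\pi$ is a linear-fractional function of the remaining decision variable $p \in \dSet(\cA)$ with nonvanishing (hence constant-sign) denominator, so it is maximized at a vertex of the simplex; coordinate ascent over states then converts any global maximizer into a deterministic one. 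This one-state-deviation argument is precisely the kind of reasoning underlying the value-polytope results the paper cites, so your proof is self-contained where the paper's is not -- a real advantage -- at the cost of length, and you correctly identify the crux: for signed $\delta$ the textbook ``simultaneously optimal at all states'' argument is unavailable.

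One step needs patching. To ``pin'' the maximizer to the extremal vertex $U$, you invoke ``the standard fact that a vertex of the polytope $\Conv(\cV)$ is the unique maximizer of $f_{\delta'}$ for some direction $\delta'$.'' At that point in your argument you do not yet know that $\Conv(\cV)$ is a polytope, and for a general compact convex set this fact is false: extreme points need not be exposed (Straszewicz's theorem gives only density of exposed points among extreme points). The fix is already contained in your own machinery. Let $D$ denote the finite set of value functions of deterministic policies. Your coordinate-ascent argument shows that for \emph{every} $\delta \in \bR^n$ one has $\max_{V \in \cV} \delta^\top V = \max_{V \in D} \delta^\top V$; since $\Conv(D) \subseteq \Conv(\cV)$ and the two compact convex sets have equal support functions, they are equal. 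Hence $\Conv(\cV) = \Conv(D)$ is a polytope whose vertices all lie in $D$, which yields simultaneously (i) that the maximizing extremal vertex $U$ is the value function of a deterministic policy, (ii) the finiteness of the set of extremal vertices that your measure-zero argument requires, and (iii) the exposedness property you appealed to, should you still want it. With that reordering, your proof is complete.
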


Denote by $\SetPi_v$ the set of policies corresponding to extremal vertices of $\cV$.
We next derive an equivalence between the \rlp~and an optimization problem which only considers policies in $\SetPi_v$.
\begin{thm}\label{thm:equivalent_optimization_problems}
For any representation $\phi \in \SetPhi$, the maximal approximation error measured over all value functions is the same as the error measured over the set of extremal vertices:
\begin{equation*}
\max_{\pi \in \SetPi} \norm{\hVphipi - \Vpi}^2_2 = \max_{\pi \in \SetPi_v} \norm{\hVphipi - \Vpi}^2_2 .
\end{equation*}
\end{thm}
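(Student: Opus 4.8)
The inequality $\max_{\pi \in \SetPi_v} \norm{\hVphipi - \Vpi}^2_2 \le \max_{\pi \in \SetPi} \norm{\hVphipi - \Vpi}^2_2$ is immediate because $\SetPi_v \subseteq \SetPi$, so the entire content of the statement is the reverse direction. The plan is to recognise the approximation error as a \emph{convex} function of the value function being approximated, and then to invoke the elementary principle that a convex function attains its maximum over a compact set at an extreme point of that set's convex hull.

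First I would make the dependence on $\Vpi$ explicit. With $\phi$ (hence the linear subspace $H$) held fixed, $\hVphipi$ is the image of $\Vpi$ under the orthogonal projector $\Pi_H$ onto $H$, which is a fixed linear operator. Writing $M := I - \Pi_H$ for the (linear) orthogonal projection onto $H^\perp$, the error becomes
\begin{equation*}
\norm{\hVphipi - \Vpi}^2_2 = \norm{M\Vpi}^2_2 =: g(\Vpi).
\end{equation*}
The map $g(V) = \norm{MV}^2_2$ is the composition of a linear map with a squared Euclidean norm, hence convex (and continuous) on $\bR^n$. Since $\cV = \{ \Vpi : \pi \in \SetPi \}$, both sides of the claimed identity are maxima of the \emph{single} convex function $g$: the left-hand side over $\cV$, the right-hand side over the set of extremal vertices.

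Next I would exploit the geometry of $\cV$. By \citet{dadashi19value} the set $\cV$ is a compact polytope, so its convex hull $\Conv(\cV)$ is a compact convex set whose extreme points are, by definition, exactly the extremal vertices; moreover these lie in $\cV$. Because a convex function attains its maximum over a compact convex set at an extreme point, we obtain
\begin{equation*}
\max_{V \in \cV} g(V) \le \max_{V \in \Conv(\cV)} g(V) = \max_{U \text{ extremal vertex}} g(U) \le \max_{V \in \cV} g(V),
\end{equation*}
forcing equality throughout. Finally, by Lemma \ref{lem:extremal_vertices} every extremal vertex $U$ satisfies $U = V^\pi$ for some deterministic $\pi \in \SetPi_v$, so $\max_{U} g(U) = \max_{\pi \in \SetPi_v} \norm{\hVphipi - \Vpi}^2_2$, which is precisely the desired identity.

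The only real subtlety is that $\cV$ itself may be nonconvex, so one cannot maximise $g$ over $\cV$ by inspecting its boundary directly; the argument must pass through $\Conv(\cV)$ and rely on the standard fact that the extreme points of the convex hull of a compact set are contained in that set. Everything else — the convexity of $g$, continuity, and attainment of the maxima — is routine once compactness of the polytope is in hand.
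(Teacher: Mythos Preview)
Your argument is correct and takes a genuinely different route from the paper. The paper expands $\norm{\hVphipi-\Vpi}^2_2=\Vpi^\top(I-\Piphi)\Vpi$, then reduces the quadratic form to a \emph{linear} functional: it picks the unit vector $\delta\in H^\perp$ along $(I-\Piphi)\Vpimax$, observes that $(\delta^\top V)^2\le\norm{(I-\Piphi)V}^2_2$ with equality at $\Vpimax$, and then invokes Lemma~\ref{lem:extremal_vertices} to conclude that the maximizer of $\delta^\top V$ over $\cV$ is an extremal vertex. Your approach bypasses this reduction entirely by recognising $g(V)=\norm{(I-\Piphi)V}^2_2$ as convex and appealing to Bauer's maximum principle on $\Conv(\cV)$. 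This is shorter and more robust (it works for any convex error functional, not just a squared seminorm), and it makes the appeal to Lemma~\ref{lem:extremal_vertices} essentially cosmetic, since the identification of extremal vertices with $\SetPi_v$ is definitional. The paper's detour through a specific $\delta$ does buy something narratively, though: it foreshadows the interest-function machinery of Theorem~\ref{thm:dual_delta_method} by exhibiting the worst-case error as governed by a single direction in $H^\perp$.
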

Theorem \ref{thm:equivalent_optimization_problems} indicates that
we can find an optimal representation by considering a finite (albeit exponential) number of value functions, since each extremal vertex corresponds to the value function of some deterministic policy, of which there are at most an exponential number.
 We will call these \emph{adversarial value functions} (AVFs), because of the minimax flavour of the \rlp.

Solving the \rlp~allows us to provide quantifiable guarantees on the performance of certain value-based learning algorithms. For example, in the context of least-squares policy iteration \citep[LSPI;][]{lagoudakis03leastsquares}, minimizing the representation error $L$ directly improves the performance bound. By contrast, we cannot have the same guarantee if $\phi$ is learned by minimizing the approximation error for a single value function.
\begin{cor}\label{cor:lspi_bound}
Let $\phi^*$ be an optimal representation in the \rlp. Consider the sequence of policies $\pi_0, \pi_1, \dots$ derived from LSPI using $\phi^*$ to approximate $V^{\pi_0}, V^{\pi_1}, \dots$ under a uniform sampling of the state-space. Then there exists an MDP-dependent constant $C \in \bR$ such that
\begin{equation*}
\limsup_{k \to \infty} \norm{V^* - V^{\pi_k}}^2_2 \le C L(\phi^*).
\end{equation*}
\end{cor}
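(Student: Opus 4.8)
The plan is to combine two ingredients: a \emph{uniform} (in $k$) control on the per-iteration value approximation error, which the \rlp~objective supplies essentially for free, together with a standard error-propagation bound for approximate policy iteration that converts such per-iteration errors into a bound on the asymptotic control loss $\norm{V^* - V^{\pi_k}}^2_2$.

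First I would record the observation that makes $L(\phi^*)$ the right quantity to control. In the population (infinite-sample) limit, LSPI evaluates $\pi_k$ by a projected Bellman fixed point $\tilde V_k$, whose error is within a constant factor of the orthogonal projection error, $\norm{\tilde V_k - V^{\pi_k}}_2 \le (1-\gamma^2)^{-1/2}\,\norm{\hat V_{\phi^*}^{\pi_k} - V^{\pi_k}}_2$, by the standard projected-fixed-point (LSTD) analysis. Because every iterate $\pi_k$ lies in $\SetPi$, the definition $L(\phi^*) = \max_{\pi \in \SetPi}\norm{\hat V_{\phi^*}^{\pi} - V^{\pi}}^2_2$ gives $\norm{\hat V_{\phi^*}^{\pi_k} - V^{\pi_k}}^2_2 \le L(\phi^*)$ \emph{simultaneously for all} $k$. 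Hence the evaluation error $\epsilon_k := V^{\pi_k} - \tilde V_k$ satisfies $\norm{\epsilon_k}^2_2 \le (1-\gamma^2)^{-1} L(\phi^*)$ uniformly in $k$. This uniform-in-$k$ bound is exactly what a $\limsup$ statement requires, and is what distinguishes $L(\phi^*)$ from an error that is only controlled for a single fixed value function.

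Next I would invoke the classical performance bound for approximate policy iteration \citep{lagoudakis03leastsquares}: if the per-iteration evaluation error is uniformly bounded, $\sup_k \norm{\epsilon_k}_\infty \le \bar\epsilon$, then
\begin{equation*}
\limsup_{k\to\infty}\norm{V^* - V^{\pi_k}}_\infty \le \frac{2\gamma}{(1-\gamma)^2}\,\bar\epsilon .
\end{equation*}
Using the elementary inequalities $\norm{\epsilon_k}_\infty \le \norm{\epsilon_k}_2$ and $\norm{V^* - V^{\pi_k}}^2_2 \le n\,\norm{V^* - V^{\pi_k}}^2_\infty$ (with $n = |\cX|$), together with the uniform bound $\norm{\epsilon_k}^2_2 \le (1-\gamma^2)^{-1} L(\phi^*)$ from the previous paragraph, I take $\bar\epsilon = \sqrt{L(\phi^*)/(1-\gamma^2)}$, square, and multiply by $n$ to obtain $\limsup_k \norm{V^* - V^{\pi_k}}^2_2 \le C\,L(\phi^*)$ with $C := \frac{4\gamma^2 n}{(1-\gamma)^4(1-\gamma^2)}$, an MDP-dependent constant. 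A tighter, dimension-free constant follows from the $\ell_2$/change-of-measure version of the bound, which replaces the factor $n$ by an MDP-dependent concentrability coefficient.

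The main obstacle is matching the idealized objects of the \rlp~to what LSPI actually computes, and three gaps need care. (i) LSPI works with state--action value functions $Q^{\pi_k}$ and greedifies over them, whereas $L$ is phrased in terms of state value functions; closing this gap requires either lifting the representation to $\cX \times \cA$ (so the maximum in $L$ ranges over $Q$-functions) or using a version of the error-propagation bound stated directly in terms of $V$-approximation error. (ii) The LSTD fixed-point guarantee is usually stated in the on-policy stationary-distribution norm, so relating it to the uniformly weighted projection error $\hat V_{\phi^*}^{\pi_k}$ under uniform sampling needs either a norm-equivalence step (introducing further MDP-dependent constants) or an on-policy sampling variant. (iii) One must argue that the finite-sample estimation error of LSPI vanishes in the population limit, so that only the projection error $\epsilon_k$ remains; making the ``in the limit'' reading of the corollary rigorous, rather than merely the per-iteration projection bound, is the technical heart of the argument.
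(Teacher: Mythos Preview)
Your proposal is correct and takes essentially the same approach as the paper: observe that $L(\phi^*)$ uniformly bounds the per-iteration value-approximation error over all $k$, then feed this uniform bound into a standard error-propagation result for approximate policy iteration. The paper's entire proof is a one-line appeal to the quadratic-norm bounds of \citet{munos03error}, which deliver the $\ell_2$ conclusion directly without your detour through $\|\cdot\|_\infty$ and the extra factor of $n$; the gaps (i)--(iii) you flag are exactly why the paper calls the corollary ``illustrative'' rather than fully rigorous.
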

\vspace{-0.8em}
This result is a direct application of the quadratic norm bounds given by \citet{munos03error}, in whose work the constant is made explicit. We emphasize that the result is illustrative; our approach should enable similar guarantees in other contexts \citep[e.g.][]{munos07performance,petrik11robust}.

\subsection{The Structure of Adversarial Value Functions}\label{subsec:adversarial_vf_discovery}

The \rlp~suggests that an agent trained to predict various value functions should develop a good state representation.
Intuitively, one may worry that there are simply too many ``uninteresting'' policies, and that a representation learned from their value functions emphasizes the wrong quantities. However, the search for an optimal representation $\phi^*$ is closely tied to the much smaller set of adversarial value functions (AVFs).
The aim of this section is to characterize the structure of AVFs and show that they form an \emph{interesting} subset of all value functions. From this, we argue that their use as auxiliary tasks should also produce structured representations. %

From Lemma \ref{lem:extremal_vertices}, recall that an AVF is geometrically defined using a vector $\delta \in \bR^n$ and the functional $f_\delta(V) := \delta^\top V$, which the AVF maximizes. Since $f_\delta$ is restricted to the value polytope, we can consider the equivalent policy-space functional $g_\delta : \pi \mapsto \delta^\top V^\pi$. Observe that
\begin{equation}\label{eqn:delta_optimization_problem}
\max_{\pi \in \SetPi} g_\delta(\pi) = \max_{\pi \in \SetPi} \delta^\top \Vpi = \max_{\pi \in \SetPi} \sum_{x \in \cX} \delta(x) \Vpi(x) .
\end{equation}
In this optimization problem, the vector $\delta$ defines a weighting over the state space $\cX$; for this reason, we call $\delta$ an \emph{interest function} in the context of AVFs. Whenever $\delta \ge 0$ componentwise, we recover the optimal  value function, irrespective of the exact magnitude of $\delta$ \citep{bertsekas12dynamic}.
If $\delta(x) < 0$ for some $x$, however, the maximization becomes a minimization. As the next result shows, the policy maximizing $f_\delta(\pi)$ depends on a network flow $\dpi$ derived from $\delta$ and the transition function $P$.
\begin{thm}\label{thm:dual_delta_method}
Maximizing the functional $g_\delta$ is equivalent to finding a network flow $\dpi$ that satisfies a reverse Bellman equation:
\begin{equation*}
\max_{\pi \in \SetPi} \delta^\top \Vpi = \max_{\pi \in \SetPi} \dpi^\top r, \qquad \dpi = \delta + \gamma \Ppi^\top \dpi .
\end{equation*}
For a policy $\tpi$ maximizing the above we have
\begin{equation*}
\Vtpi(x) = r(x) + \gamma \left \{ \begin{array}{ll} 
	\max_{a \in \cA} \expect_{x' \sim P} \Vtpi(x') & d_{\tilde \pi}(x) > 0, \\
	\min_{a \in \cA} \expect_{x' \sim P} \Vtpi(x') & d_{\tilde \pi}(x) < 0 . \\
	\end{array} \right .
\end{equation*}
\end{thm}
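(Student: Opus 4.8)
The plan is to prove the two assertions in sequence: first the scalar identity $\delta^\top \Vpi = \dpi^\top r$ for each fixed $\pi$ (which gives the equality of the two maxima), and then the pointwise max/min structure of the optimizing policy.

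For the first claim, I would start from the closed form $\Vpi = (I - \gamma \Ppi)^{-1} r$ and define $\dpi := (I - \gamma \Ppi^\top)^{-1}\delta$. Rearranging gives $(I - \gamma\Ppi^\top)\dpi = \delta$, i.e. $\dpi = \delta + \gamma \Ppi^\top \dpi$, which is exactly the reverse Bellman equation in the statement; invertibility of $I - \gamma\Ppi^\top$ for $\gamma < 1$ guarantees that $\dpi$ exists and is unique. The identity then follows by transporting the resolvent across the inner product: $\delta^\top\Vpi = \delta^\top (I-\gamma\Ppi)^{-1} r = \big((I-\gamma\Ppi^\top)^{-1}\delta\big)^\top r = \dpi^\top r$. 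Since this holds for each $\pi \in \SetPi$, maximizing both sides over $\SetPi$ yields the first displayed equation. The visitation reading $\dpi(x) = \sum_y \delta(y)\sum_{t\ge 0}\gamma^t \Ppi(x_t = x \cbar x_0 = y)$ justifies calling $\dpi$ a network flow, but is not needed for the identity itself.

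For the structure of the maximizer I would first invoke Lemma \ref{lem:extremal_vertices}: since $g_\delta(\pi) = f_\delta(\Vpi)$ and $f_\delta$ is maximized over $\cV$ at an extremal vertex, there is a deterministic $\tpi$ attaining $\max_{\pi \in \SetPi} \delta^\top\Vpi$. To pin down its action at a fixed state $x$, I would perturb $\tpi$ only at $x$: let $\pi_\epsilon$ place mass $\epsilon$ on a candidate action $a$ and mass $1-\epsilon$ on $\tpi(x)$, leaving every other state unchanged. Expanding $\delta^\top(V^{\pi_\epsilon} - \Vtpi)$ with the performance-difference identity (a telescoping of Bellman's equation) and using that $\pi_\epsilon$ agrees with $\tpi$ off $x$ while the advantage of a policy's own action vanishes, every term collapses except the one at $x$, leaving $\delta^\top(V^{\pi_\epsilon} - \Vtpi) = \epsilon\, d_{\pi_\epsilon}(x)\, A(x,a)$, where $A(x,a) = \gamma\big(\expect_{x'\sim P(\cdot\cbar x,a)}\Vtpi(x') - \expect_{x'\sim P(\cdot\cbar x,\tpi)}\Vtpi(x')\big)$.

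The sign analysis then finishes the argument. If $d_{\tpi}(x) > 0$, then by continuity of $\pi\mapsto\dpi(x)$ (a consequence of the invertibility above) we have $d_{\pi_\epsilon}(x) > 0$ for small $\epsilon$; optimality of $\tpi$ forces $\delta^\top(V^{\pi_\epsilon} - \Vtpi)\le 0$, hence $A(x,a)\le 0$ for every $a$, i.e. $\tpi(x)$ attains $\max_a \expect_{x'\sim P(\cdot\cbar x,a)}\Vtpi(x')$. The case $d_{\tpi}(x) < 0$ is symmetric and forces the minimizing action, and substituting these into Bellman's equation for $\Vtpi$ gives the stated recursion. The main obstacle is precisely this coupling: the flow weight $\dpi(x)$ is itself policy-dependent, so a discrete single-action swap need not preserve the sign of the weight and cannot be analyzed directly; it is the infinitesimal mixing $\pi_\epsilon$ together with continuity of the flow that lets the sign of $d_{\tpi}(x)$ control the direction of improvement.
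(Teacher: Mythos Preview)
Your argument is correct. The first part (the duality $\delta^\top\Vpi = \dpi^\top r$ via transposing the resolvent) is exactly what the paper does.

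For the second part you take a genuinely different route. The paper fixes a state $x^*$ and uses a \emph{matrix splitting} $\Ppi = A^\pi + B^\pi$, where $B^\pi$ carries only the row of $x^*$. This yields $\delta^\top\Vpi = {C^\pi}^\top r + C^\pi(x^*)\,\expect_{x'\sim P^\pi_{x^*}}\Vpi(x')$ with $C^\pi = (I-\gamma A^{\pi\top})^{-1}\delta$; crucially $C^\pi$ does not depend on $\pi(\cdot\cbar x^*)$, so the comparison against any alternative action at $x^*$ is exact and governed by the sign of $C^\pi(x^*)$. The paper then has to do additional work---a Neumann-series expansion of $\dpi$ in terms of $C^\pi$---to show that $C^\pi(x^*)$ and $\dpi(x^*)$ share a sign. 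Your approach replaces all of this with the performance-difference lemma and a continuity argument: the collapse to a single term $d_{\pi_\epsilon}(x)\,\epsilon\,A(x,a)$ is exactly right (since $\sum_a \tpi(a\cbar s)A^{\tpi}(s,a)=0$ at every $s\ne x$), and continuity of $\pi\mapsto (I-\gamma\Ppi^\top)^{-1}\delta$ is immediate from invertibility. What you gain is that you never need the auxiliary quantity $C^\pi$ or the sign-matching series argument, and the proof reads in standard RL vocabulary; what the paper's splitting buys is an exact (non-infinitesimal) comparison that does not rely on a limiting $\epsilon\to 0$ step, and it does not need to appeal to Lemma~\ref{lem:extremal_vertices} to secure a deterministic maximizer in advance.
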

\begin{cor}\label{cor:2n_distinct_avfs}
There are at most $2^n$ distinct adversarial value functions.
\end{cor}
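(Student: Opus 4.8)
The plan is to exhibit an explicit family of at most $2^n$ value functions and to show that every AVF belongs to it. For each sign assignment $\sigma \in \{-1,+1\}^{\cX}$, I would define the \emph{mixed} Bellman operator $T_\sigma : \bR^n \to \bR^n$ by
\begin{equation*}
(T_\sigma V)(x) = r(x) + \gamma \left \{ \begin{array}{ll}
	\max_{a \in \cA} \expect_{x' \sim P} V(x') & \sigma(x) = +1, \\
	\min_{a \in \cA} \expect_{x' \sim P} V(x') & \sigma(x) = -1. \\
	\end{array} \right .
\end{equation*}
First I would check that each $T_\sigma$ is a $\gamma$-contraction in $\| \cdot \|_\infty$: since $a \mapsto \max_a$ and $a \mapsto \min_a$ are non-expansive and $\big| \expect_{x' \sim P} V - \expect_{x' \sim P} W \big| \le \| V - W \|_\infty$, the standard argument for the Bellman optimality operator carries over verbatim to the mixed case. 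By the Banach fixed point theorem each $T_\sigma$ has a unique fixed point $V_\sigma \in \bR^n$, so the set $\{ V_\sigma : \sigma \in \{-1,+1\}^{\cX} \}$ contains at most $2^n$ distinct vectors. It then suffices to show every AVF equals some $V_\sigma$.

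Let $V^{\tpi}$ be an AVF. By Lemma \ref{lem:extremal_vertices} it is an extremal vertex of $\cV$, hence the unique maximizer of $f_\delta$ for every $\delta$ in the interior of its normal cone, which is open and full-dimensional because $V^{\tpi}$ is a genuine vertex of the convex hull. Within this open cone I would choose a \emph{generic} $\delta$, avoiding the measure-zero set on which the network flow $d_{\tpi} = (I - \gamma (P^{\tpi})^\top)^{-1}\delta$ has a vanishing coordinate; this is possible since for each of the finitely many deterministic policies and each state, $\{\delta : d_\pi(x) = 0\}$ is the zero set of a nonzero linear functional (a row of an invertible matrix), hence a hyperplane. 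For the resulting $\delta$ we have $d_{\tpi}(x) \neq 0$ at every state, so Theorem \ref{thm:dual_delta_method} forces $V^{\tpi}(x) = r(x) + \gamma \max_a \expect_{x' \sim P} V^{\tpi}(x')$ wherever $d_{\tpi}(x) > 0$ and $V^{\tpi}(x) = r(x) + \gamma \min_a \expect_{x' \sim P} V^{\tpi}(x')$ wherever $d_{\tpi}(x) < 0$, now at \emph{all} states. Setting $\sigma(x) = +1$ when $d_{\tpi}(x) > 0$ and $\sigma(x) = -1$ otherwise gives $V^{\tpi} = T_\sigma V^{\tpi}$, and by uniqueness $V^{\tpi} = V_\sigma$.

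The crux is precisely this genericity step, and I expect it to be the main obstacle, because it circumvents the one case Theorem \ref{thm:dual_delta_method} leaves open: states with $d_{\tpi}(x) = 0$, where neither the $\max$ nor the $\min$ branch is asserted. A naive treatment that fixes a single maximizing policy cannot exclude an intermediate action at such a state, so $V^{\tpi}$ would not literally be a fixed point of any $T_\sigma$; it is the freedom to re-select $\delta$ inside the vertex's full-dimensional normal cone that removes this degeneracy while still isolating the same vertex. Once every AVF is identified with some $V_\sigma$, the bound of $2^n$ sign patterns yields the claim.
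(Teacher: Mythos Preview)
Your proposal is correct and follows essentially the same approach as the paper's proof: define a mixed min/max Bellman operator $T_\sigma$ for each sign pattern, show it is a $\gamma$-contraction with a unique fixed point, and then argue that every extremal vertex is such a fixed point by selecting a generic $\delta$ in its (full-dimensional) normal cone so that $d_{\tpi}$ has no vanishing coordinate. The only cosmetic difference is that the paper appeals to \citet{rockafellar09variational} for the full-dimensionality of the normal cone and then uses a spanning argument together with the invertibility of $(I-\gamma {P^{\tpi}}^\top)^{-1}$ to find the generic $\delta$, whereas you phrase the same step directly as avoiding a finite union of hyperplanes inside an open set of positive measure.
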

The vector $\dpi$ corresponds to the sum of discounted interest weights flowing through a state $x$, similar to the dual variables in the theory of linear programming for MDPs \citep{puterman94markov}.
Theorem \ref{thm:dual_delta_method}, by way of the corollary, implies that there are fewer AVFs ($\le 2^n$) than deterministic policies ($= |\cA|^n)$. It also implies that AVFs relate to a reward-driven purpose, similar to how the optimal value function describes the goal of maximizing return. We will illustrate this point empirically in Section \ref{subsec:empirical_studies_avfs}. %

\subsection{Relationship to Auxiliary Tasks}\label{sec:relationship_to_auxiliary_tasks}

So far we have argued that solving the \rlp~leads to a representation which is optimal in a meaningful sense. However, solving the \rlp~seems computationally intractable: there are an exponential number of deterministic policies to consider (Prop. \ref{prop:optimization} in the appendix gives a quadratic formulation with quadratic constraints). Using interest functions does not mitigate this difficulty: the computational problem of finding the AVF for a single interest function is NP-hard, even when restricted to deterministic MDPs (Prop. \ref{prop:np-hard} in the appendix).

Instead, in this section we consider a relaxation of the \rlp~and show that this relaxation describes existing representation learning methods, in particular those that use auxiliary tasks. Let $\xi$ be some distribution over $\bR^n$. We begin by replacing the maximum in \eqnref{representation_learning_problem} by an expectation:
\begin{equation}\label{eqn:relaxed_learning_problem}
\min_{\phi \in \SetPhi} \expect_{V \sim \xi} \norm{\hat V_\phi - V}_2^2.
\end{equation}

The use of the expectation offers three practical advantages over the use of the maximum. First, this leads to a differentiable objective which can be minimized using deep learning techniques. Second, the choice of $\xi$ gives us an additional degree of freedom; in particular, $\xi$ needs not be restricted to the value polytope. Third, the minimizer in \eqnref{relaxed_learning_problem} is easily characterized, as the following theorem shows.

\begin{thm}\label{thm:optimal_representation_is_svd}
Let $u^*_1, \dots, u^*_d \in \bR^n$ be the principal components of the distribution $\xi$, in the sense that
\begin{equation*}
u^*_i := \argmax_{u \in B_i} \expect_{V \sim \xi} (u^\top V)^2, \text{ where } B_i := \{ u \in \bR^n : \| u \|^2_2 = 1, u^\top u^*_j = 0 \; \forall j < i \} .
\end{equation*}
Equivalently, $u^*_1, \dots, u^*_d$ are the eigenvectors of $\expect\nolimits_{\xi} VV^\top \in \bR^{n \times n}$ with the $d$ largest eigenvalues.
Then the matrix $[u_1^*, \dots, u_d^*] \in \bR^{n \times d}$, viewed as a map $\cX \to \bR^d$, is a solution to \eqnref{relaxed_learning_problem}. When the principal components are uniquely defined, any minimizer of \eqnref{relaxed_learning_problem} spans the same subspace as $u_1^*, \dots, u_d^*$.
\end{thm}

One may expect the quality of the learned representation to depend on how closely the distribution $\xi$ relates to the \rlp. 
From an auxiliary tasks perspective, this corresponds to choosing tasks that are in some sense useful.
For example, generating value functions from the uniform distribution over the set of policies $\SetPi$, while a natural choice, may put too much weight on ``uninteresting'' value functions. 

In practice, we may further restrict $\xi$ to a finite set $\bbV$. Under a uniform weighting, this leads to a \emph{representation loss}
\begin{equation}\label{eqn:auxiliary_tasks_loss}
L(\phi; \bbV) := \sum_{V \in \bbV} \norm{\hat V_\phi - V}_2^2
\end{equation}
which corresponds to the typical formulation of an auxiliary-task loss \citep[e.g.][]{jaderberg17reinforcement}.
In a deep reinforcement learning setting, one typically minimizes \eqnref{auxiliary_tasks_loss} using stochastic gradient descent methods, which scale better than batch methods such as singular value decomposition (but see \citet{wu19laplacian} for further discussion). 

Our analysis leads us to conclude that, in many cases of interest, the use of auxiliary tasks produces representations that are close to the principal components of the set of tasks under consideration. If $\bbV$ is well-aligned with the \rlp, minimizing $L(\phi; \bbV)$
should give rise to a reasonable representation.
To demonstrate the power of this approach, in Section \ref{sec:empirical_studies} we will study the case when the set $\bbV$ is constructed by sampling AVFs -- emphasizing the policies that support the solution to the \rlp.

\subsection{Relationship to Proto-Value Functions}

Proto-value functions \citep[\textsc{pvf}]{mahadevan07proto} are a family of representations which vary smoothly across the state space. 
Although the original formulation defines this representation as the largest-eigenvalue eigenvectors of the Laplacian of the transition function's graphical structure, recent formulations use the top singular vectors of $(I - \gamma \Ppi)^{-1}$, where $\pi$ is the uniformly random policy \citep{stachenfeld14design,machado17laplacian,behzadian18feature}.

In line with the analysis of the previous section, proto-value functions can also be interpreted as defining a set of value-based auxiliary tasks. Specifically, if we define an indicator reward function $r_{y}(x) := \indic{x = y}$ and a set of value functions $\bbV = \{ (I - \gamma P^\pi)^{-1} r_y \}_{y \in \cX}$ with $\pi$ the uniformly random policy, then any $d$-dimensional representation that minimizes \eqnref{auxiliary_tasks_loss} spans the same basis as the $d$-dimensional \textsc{pvf} (up to the bias term). This suggests a connection with hindsight experience replay \citep{andrychowicz17hindsight}, whose auxiliary tasks consists in reaching previously experienced states.

\section{Empirical Studies}\label{sec:empirical_studies}

In this section we complement our theoretical analysis with an experimental study. In turn, we take a closer look at
1) the \textbf{structure} of adversarial value functions, 2) the \textbf{shape of representations} learned using AVFs, and 
3) the \textbf{performance profile} of these representations in a control setting.

Our eventual goal is to demonstrate that the representation learning problem \eqnref{representation_learning_problem}, which is based on approximating value functions, gives rise to representations that are both interesting and comparable to previously proposed schemes. Our concrete instantiation (Algorithm \ref{alg:AVF_representation_learning}) uses the representation loss \eqnref{auxiliary_tasks_loss}. As-is, this algorithm is of limited practical relevance (our AVFs are learned using a tabular representation) but we believe provides an inspirational basis for further developments.

\begin{algorithm}[ht]
\caption{Representation learning using AVFs}\label{alg:AVF_representation_learning}
\begin{algorithmic}
\INPUT $k$ -- desired number of AVFs, $d$ -- desired number of features.
\STATE Sample $\delta_1, \dots, \delta_k \sim [-1, 1]^n$
\STATE Compute $\mu_i = \argmax_{\pi} \delta_i^\top \Vpi$ using a policy gradient method
\STATE Find $\phi^* = \argmin_{\phi} L(\phi; \{V^{\mu_1}, \dots, V^{\mu_k} \})$ (Equation \ref{eqn:auxiliary_tasks_loss})
\end{algorithmic}
\end{algorithm}

We perform all of our experiments within the four-room domain
\citep[][Figure \ref{fig:alignment-and-AVF}, see also Appendix \ref{sec:four_room_domain}]{sutton99between,solway14optimal,machado17laplacian}.

\begin{figure*}[!ht]
\center{
\includegraphics[width=1.1in]{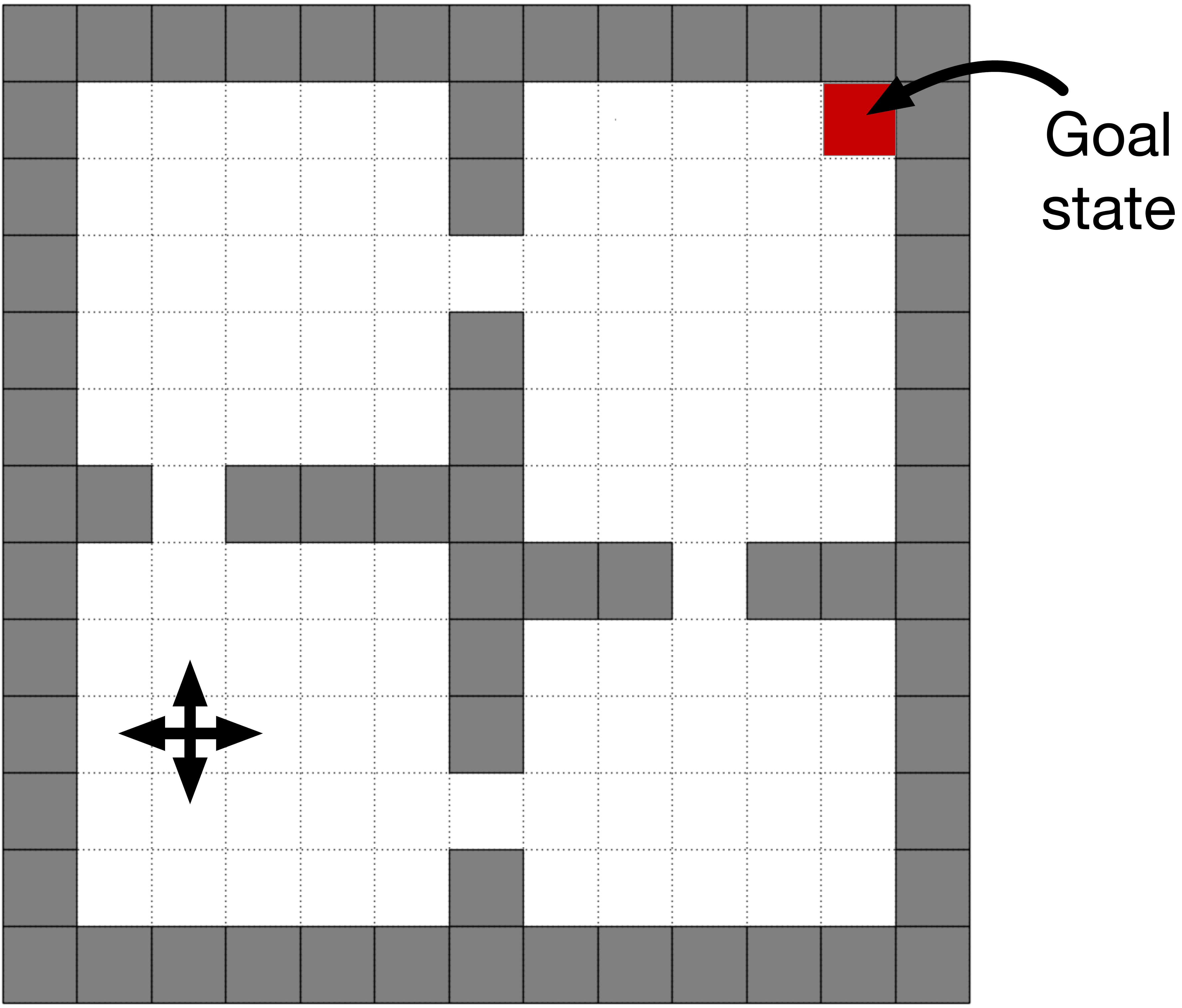}
\includegraphics[width=4.2in]{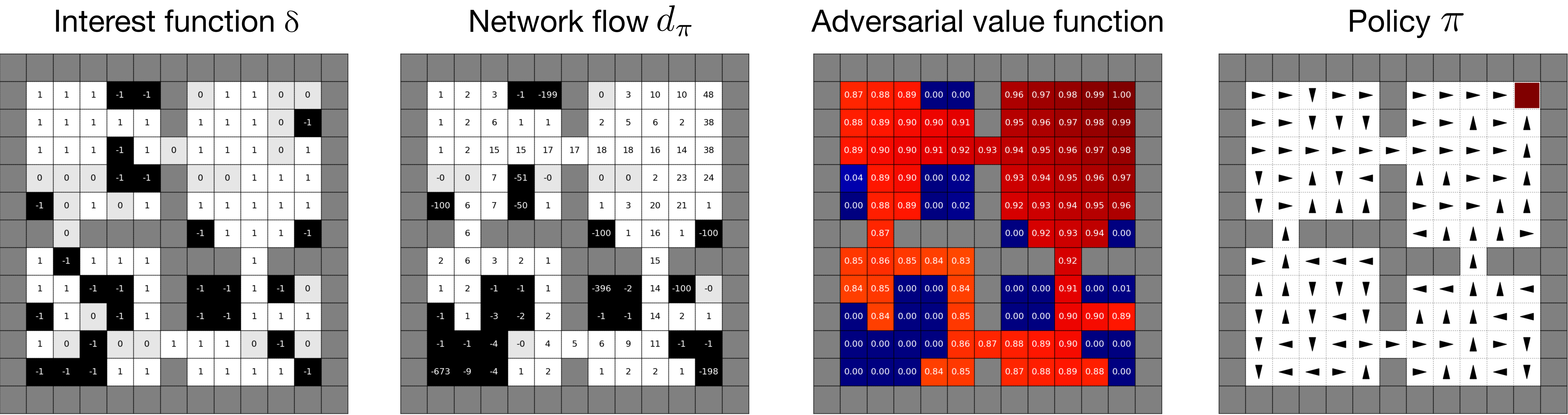}
\caption{\textbf{Leftmost.} The four-room domain. \textbf{Other panels.} An interest function $\delta$, the network flow $\dpi$, the corresponding adversarial value function (blue/red = low/high value) and its policy.
\label{fig:alignment-and-AVF}}
}
\end{figure*}

We consider a two-part approximation where we pretrain $\phi$ end-to-end to predict a set of value functions. Our aim here is to compare the effects of using different sets of value functions, including AVFs, on the learned representation. %
As our focus is on the efficient use of a $d$-dimensional representation (with $d < n$, the number of states), we encode individual states as one-hot vectors and map them into $\phi(x)$ without capacity constraints. 
Additional details %
may be found in Appendix \ref{sec:empirical_studies_methodology}.

\subsection{Adversarial Value Functions}\label{subsec:empirical_studies_avfs}

Our first set of results studies the structure of adversarial value functions in the four-room domain. We generated interest functions by assigning a value $\delta(x) \in \{-1, 0, 1\}$ uniformly at random to each state $x$ (Figure \ref{fig:alignment-and-AVF}, left). We restricted $\delta$ to these discrete choices for illustrative purposes.

We then used model-based policy gradient \citep{sutton00policy} to find the policy maximizing $\sum_{x \in \cX} \delta(x) V^\pi(x)$. %
We observed some local minima or accumulation points but as a whole reasonable solutions were found. The resulting network flow and AVF for a particular sample are shown in Figure \ref{fig:alignment-and-AVF}.
For most states, the signs of $\delta$ and $\dpi$ agree; however, this is not true of all states (larger version and more examples in appendix, Figures \ref{fig:alignment-and-AVF-big}, \ref{fig:more-interest-functions}).
As expected, states for which $\dpi > 0$ (respectively, $\dpi < 0$) correspond to states maximizing (resp. minimizing) the value function. %
Finally, we remark on the ``flow'' nature of $\dpi$: trajectories over minimizing states accumulate in corners or loops, while those over maximizing states flow to the goal.
We conclude that AVFs exhibit interesting structure, and are generated by policies that are not random (Figure \ref{fig:alignment-and-AVF}, right). As we will see next, this is a key differentiator in making AVFs good auxiliary tasks.

\subsection{Representation Learning with AVFs}

We next consider the representations that arise from training a deep network to predict AVFs (denoted \textsc{avf} from here on).
We sample $k = 1000$ interest functions and use Algorithm \ref{alg:AVF_representation_learning} to generate $k$ AVFs. 
We combine these AVFs into the representation loss \eqnref{auxiliary_tasks_loss} and adapt the parameters of the deep network using Rmsprop \citep{tieleman12rmsprop}.

We contrast the AVF-driven representation with one learned by predicting the value function of random deterministic policies (\textsc{rp}). Specifically, these policies are generated by assigning an action uniformly at random to each state. We also consider the value function of the uniformly random policy (\textsc{value}). While we make these choices here for concreteness, other experiments yielded similar results (e.g. predicting the value of the optimal policy; appendix, Figure \ref{fig:learned-value-representations}). In all cases, we learn a $d=16$ dimensional representation, not including the bias unit.
\begin{figure*}[htb]
\center{
\includegraphics[width=4.6in]{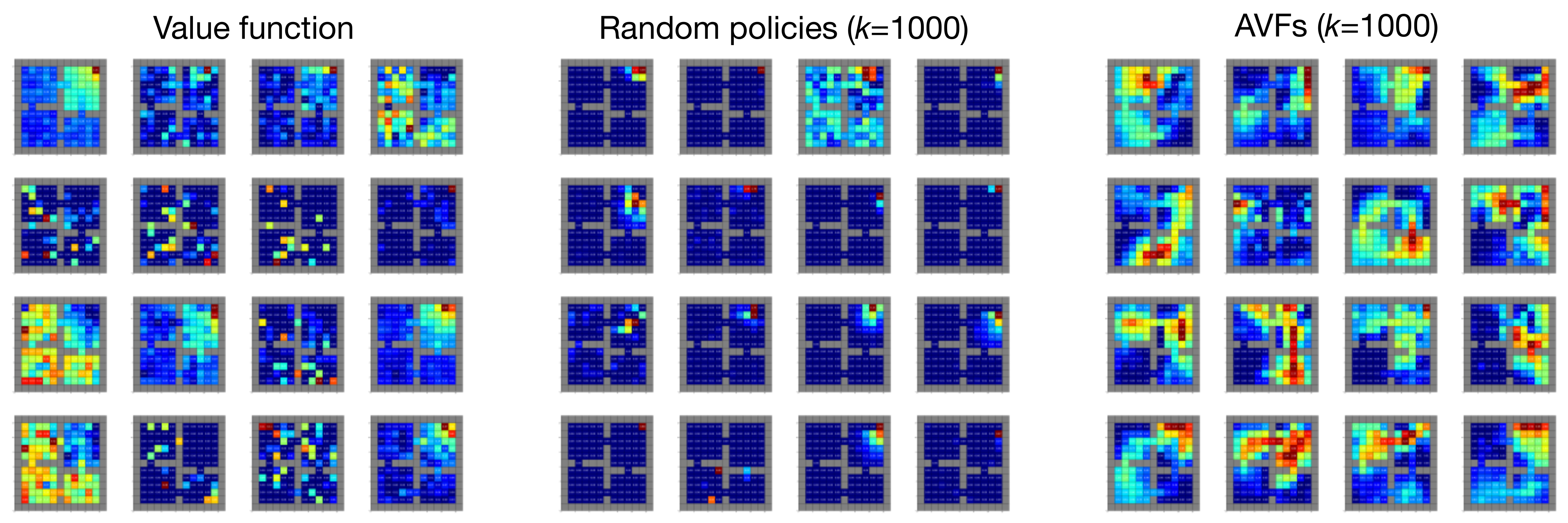}
\caption{16-dimensional representations learned by predicting a single value function, the value functions of 1000 random policies, or 1000 AVFs sampled using Algorithm \ref{alg:AVF_representation_learning}. Each panel element depicts the activation of a given feature across states, with blue/red indicating low/high activation.\label{fig:different-representations}}
}
\end{figure*}

Figure \ref{fig:different-representations} shows the representations learned by the three methods. The features learned by \textsc{value} resemble the value function itself (top left feature) or its negated image (bottom left feature). Coarsely speaking, these features capture the general distance to the goal but little else. The features learned by \textsc{rp} are of even worse quality. This is because almost all random deterministic policies cause the agent to avoid the goal (appendix, Figure \ref{fig:random-policies-vfs}). 
The representation learned by \textsc{avf}, on the other hand, captures the structure of the domain, including paths between distal states and focal points corresponding to rooms or parts of rooms.

Although our focus is on the use of AVFs as auxiliary tasks to a deep network, we observe the same results when discovering a representation using singular value decomposition (Section \ref{sec:relationship_to_auxiliary_tasks}), as described in Appendix \ref{sec:representations_from_svd}.
All in all, our results illustrate that, among all value functions, AVFs are particularly useful auxiliary tasks for representation learning.

\subsection{Learning the Optimal Policy}
\begin{wrapfigure}{r}{0.5 \textwidth}
\vspace{-1em}
\center{
\includegraphics[width=.42\textwidth]{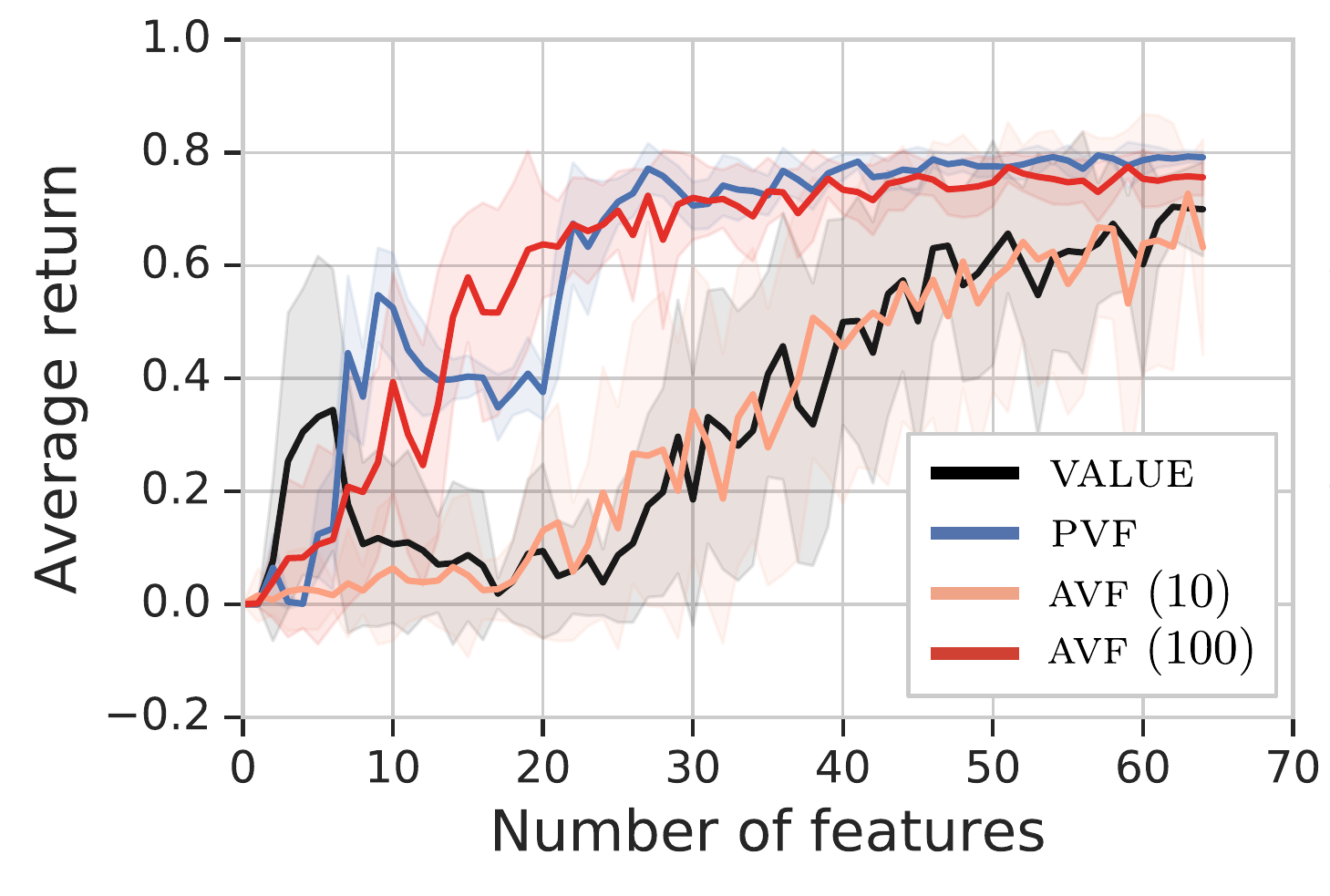}
\caption{Average discounted return achieved by policies learned using a representation produced by $\textsc{value}$, $\textsc{avf}$, or $\textsc{pvf}$. Average is over $20$ random seeds and shading gives standard deviation.\label{fig:sarsa-performance}}
}
\end{wrapfigure}

In a final set of experiments, we consider learning a reward-maximizing policy using a pretrained representation and
a model-based version of the SARSA algorithm \citep{rummery94online,sutton98reinforcement}. 
We compare the value-based and AVF-based representations from the previous section (\textsc{value} and {\textsc{avf}), and also proto-value functions (\textsc{pvf}; details in Appendix \ref{sec:pvf_implementation}). 

We report the quality of the learned policies after training, as a function of $d$, the size of the representation. Our quality measures is the average return from the designated start state (bottom left). Results are provided in Figure \ref{fig:sarsa-performance} and Figure \ref{fig:sarsa_both} (appendix).
We observe a failure of the \textsc{value} representation to provide a useful basis for learning a good policy, even as $d$ increases; while the representation is not rank-deficient, the features do not help reduce the approximation error. In comparison, our \textsc{avf} representations perform similarly to \textsc{pvf}s. Increasing the number of auxiliary tasks also leads to better representations; recall that \textsc{pvf} implicitly uses $n=104$ auxiliary tasks.

\section{Related Work}

Our work takes inspiration from earlier research in basis or feature construction for reinforcement learning. \citet{ratitch04sparse},
\citet{foster02structure}, \citet{menache05basis}, \citet{yu09basis}, \citet{bhatnagar13feature}, and \citet{song16linear} consider methods for adapting parametrized basis functions using iterative schemes. 
Including \citet{mahadevan07proto}'s proto-value functions, a number of works \citep[we note][]{dayan93improving,petrik07analysis,mahadevan10basis,ruan15representation,barreto17successor} have used characteristics of the transition structure of the MDP to generate representations; these are the closest in spirit to our approach, although none use the reward or consider the geometry of the space of value functions. \citet{parr07analyzing}
proposed constructing a representation from successive Bellman errors, \citet{keller06automatic} used dimensionality reduction methods; finally \citet{hutter09feature} proposes a universal scheme for selecting representations.

Deep reinforcement learning algorithms have made extensive use of auxiliary tasks to improve agent performance, beginning perhaps with universal value function approximators \citep{schaul15universal} and the UNREAL architecture \citep{jaderberg17reinforcement}; see also \citet{dosovitskiy17learning}, \citet{francoislavet18combined} and, more tangentially, \citet{vandenoord18representation}. \citet{levine17shallow} and \citet{chung19twotimescale} make explicit use of two-part approximations to derive more sample efficient deep reinforcement learning algorithms.
The notion of augmenting an agent with side predictions regarding the world is not new, with roots in TD models \citep{sutton95td}, predictive state representations \citep{littman02predictive} and the Horde architecture \citep{sutton11horde}, which itself is based on \citet{selfridge59pandemonium}'s Pandemonium architecture.

In passing, we remark on a number of works which aim to quantify or explain the usefulness of a representation. \citet{parr08analysis} studies the particular case of linear representations, while \citet{li06towards,abel16nearoptimal} consider the approximation error that arises from state abstraction. More recently, \citet{nachum19nearoptimal} provide some interesting guarantees in the context of hierarchical reinforcement learning, while \citet{such19atari} visualizes the representations learned by Atari-playing agents. Finally, \citet{bertsekas18featurebased} remarks on the two-part approximation we study here.

\section{Conclusion}

In this paper we studied the notion of an adversarial value function, derived from a geometric perspective on representation learning in RL. Our work shows that adversarial value functions exhibit interesting structure, and are good auxiliary tasks when learning a representation of an environment. We believe our work to be the first to provide formal evidence as to the usefulness of predicting value functions for shaping an agent's representation.

Our work opens up the possibility of automatically generating auxiliary tasks in deep reinforcement learning, analogous to how deep learning itself enabled a move away from hand-crafted features. %
A number of practical considerations remain to be addressed. First, our sampling procedure is clearly inefficient, and may be improved by encouraging diversity within AVFs. Second, practical implementations require learning the AVFs concurrently with the main task. Doing results in off-policy learning, whose negative effects are well-documented even in recent applications. \citep[e.g.][]{vanhasselt18deep}. Third, interest functions in large domains should incorporate some degree of smoothness, rather than vary rapidly from state to state. %

From a mathematical perspective, our formulation of the representation learning problem \eqnref{representation_learning_problem} was made with both convenience and geometry in mind. Conceptually, it may be interesting to consider our approach in other norms, including the weighted norms used in approximation results.

\section{Acknowledgements}

The authors thank the many people who helped shape this project through discussions and feedback on early drafts: Lihong Li, George Tucker, Doina Precup, Ofir Nachum, Csaba Szepesv\'ari, Georg Ostrovski, Marek Petrik, Marlos Machado, Tim Lillicrap, Danny Tarlow, Saurabh Kumar, and Carles Gelada. Special thanks also to Philip Thomas and Scott Niekum, who gave this project its initial impetus.

\section{Author Contributions}

M.G.B., W.D., D.S., and N.L.R. conceptualized the representation learning problem. M.G.B., W.D., T.L., A.A.T., R.D., D.S., and N.L.R. contributed to the theoretical results. M.G.B., W.D., P.S.C., R.D., and C.L. performed experiments and collated results. All authors contributed to the writing.

\bibliographystyle{apalike}
\bibliography{auxiliary-tasks}

\newpage

\appendix

\section{Proof of Lemma \ref{lem:extremal_vertices}}

\def \bellopsig {\mathcal{T}_{\sigma}}
\def \bellopsigi {\mathcal{T}_{\sigma_i}}

Consider the value polytope $\cV$. We have using Corollary 1 of \citet{dadashi19value} that 
\begin{align}\cV \subseteq \Conv(\cV) = \Conv(V^{\pi_1},\dots,V^{\pi_m}),
\label{eqn:convex_hull}
\end{align}
where $\pi_1,\dots, \pi_m$ is a finite collection of deterministic policies. We assume that this set of policies is of minimal cardinality e.g. the value functions $V^{\pi_1},\dots,V^{\pi_m}$ are distinct.\\

The optimization problem $\max\limits_{V \in \cV} \delta^\top V$ is equivalent to the linear program $\max\limits_{V \in \Conv(\cV)} \delta^\top V$, and
the maximum is reached at a vertex $U$ of the convex hull of $\cV$ \citep{boyd04convex}. By \eqnref{convex_hull}, $U$ is the value function of a deterministic policy. Now consider $\delta \in \bR^n$ such that $f_\delta$ attains its maximum over multiple elements of the convex hull. By hypothesis, there must be two policies $\pi_i$, $\pi_j$ such that $V^{\pi_i} \ne V^{\pi_j}$ and
\begin{equation*}
\max\limits_{V \in \cV} \delta^\top V = \delta^\top V^{\pi_i} = \delta^\top V^{\pi_j},
\end{equation*}
and thus
\begin{equation}
\label{eqn:multiple_max_vertices}
\delta^\top (V^{\pi_i} - V^{\pi_j}) = 0 .
\end{equation}
Write $\Delta$ for the ensemble of such $\delta$. We have from \eqnref{multiple_max_vertices}: 
$$\Delta \subseteq \bigcup\limits_{1\leq i < j\leq m} \{\delta \in \bR^n \, | \, \delta^T (V_{\pi_i} - V_{\pi_j})=0\}.$$
As $V^{\pi_1},\dots,V^{\pi_m}$ are distinct, $\Delta$ is included in a finite union of hyperplanes (recall that hyperplanes of $\bR^n$ are vector spaces of dimension $n-1$). The Lebesgue measure of a hyperplane is 0 (in $\bR^n$), hence a finite union of hyperplanes also has Lebesgue measure 0. Hence $\Delta$ itself has Lebesgue measure of 0 in $\bR^n$.

\section{Proof of Corollary \ref{cor:2n_distinct_avfs}}\label{sec:at_most_2n_avfs}

Similarly to the proof of Lemma 1, we introduce $V^{\pi_1},\dots,V^{\pi_m}$ which are the distinct vertices of the convex hull of the value polytope $\cV$. Note that $\pi_1, \dots, \pi_m$ are deterministic policies. We shall show that there are at most $2^n$ such vertices. 

Recall the definition of a cone in $\bR^n$: $C$ is a cone in $\bR^n$ if $\forall v \in C, \forall \alpha \ge 0, \alpha v \in C$. For each vertex $V^{\pi_i}$, \citet[][Theorem 6.12]{rockafellar09variational} states that there is an associated cone $C_{i}$ of nonzero Lebesgue measure in $\bR^n$ such that 
$$\forall \delta \in C_{i}, \, \argmax_{V \in \cV} \delta^\top V = V^{\pi_i}.$$ 
Now using Theorem \ref{thm:dual_delta_method}, we have
$$\max_{V \in \cV} \delta^\top V = \max_{\pi \in \SetPi} d^\top_\pi r, 
\text{ where } d_\pi = (I - \gamma {P^\pi}^\top)^{-1} \delta.$$
For all $\delta \in C_i$ the corresponding policy $\pi_i$ is the same (by hypothesis). For such a $\delta$, define $d_{\pi_i, \delta} := (I - \gamma {P^{\pi_i}}^\top)^{-1} \delta$, such that
$$\delta^\top V^{\pi_i} = d_{\pi_i, \delta}^\top r.$$
Because $C_i$ is a cone of nonzero Lebesgue measure in $\bR^n$, we have $span(C_i) = \bR^n$. 
Combined with the fact that $(I - \gamma {P^{\pi_i}}^\top)^{-1}$ is full rank, this implies we can find a direction $\delta_i$ in $C_i$ for which $d_{\pi_i, \delta_i}(x) \neq 0$ for all $x \in \cX$.
For this $\delta_i$, using Theorem \ref{thm:dual_delta_method} we have:
\begin{equation}
V^{\pi_i}(x) = r(x) + \gamma \left \{ \begin{array}{ll} 
  \max_{a \in \cA} \expect_{x' \sim P} V^{\pi_i}(x') & d_{\pi_i, \delta_i}(x) > 0, \\
  \min_{a \in \cA} \expect_{x' \sim P} V^{\pi_i}(x') & d_{\pi_i, \delta_i}(x) < 0, \\
  \end{array} \right .
\label{eq:max_V_from_d_pi}
\end{equation}
and each state is ``strictly'' a maximizer or minimizer (the purpose of our cone argument was to avoid the undefined case where $d_{\pi_i, \delta_i}(x) = 0$). Now define $\sigma_i \in \{-1, 1\}^n$, $\sigma_i(x) = \text{sign}(d_{\pi_i, \delta_i}(x))$. We have:
\begin{align*}
V^{\pi_i}(x) &= r(x) + \gamma \sigma_i(x) \max_{a \in \cA} \sigma_i(x) \expect_{x' \sim P} V^{\pi_i}(x') \\
&= \bellopsigi V^{\pi_i}(x)
\end{align*}
where $\bellopsig V(x) = r(x) + \gamma \sigma(x) \max_{a \in \cA} \sigma(x) \expect_{x' \sim P} V(x')$ for $\sigma \in \{-1, 1\}^n$. We show that $\bellopsig$ is a contraction mapping: for any $x \in \cX$ and $\sigma \in \{-1, 1\}^n$,
\begin{align*}
| \bellopsig V_1(x) - \bellopsig V_2(x) | &= |r(x) + \gamma \sigma(x) \max_{a \in \cA} \sigma(x) \expect_{x' \sim P} V_1(x') 
- r(x) - \gamma \sigma(x) \max_{a \in \cA} \sigma(x) \expect_{x' \sim P} V_2(x') | \\
 &= \gamma | \max_{a \in \cA} \sigma(x) \expect_{x' \sim P} V_1(x') 
- \max_{a \in \cA} \sigma(x) \expect_{x' \sim P} V_2(x') | \\
& \leq \gamma \max_{a \in \cA} | \sigma(x) \expect_{x' \sim P} V_1(x') 
- \sigma(x) \expect_{x' \sim P} V_2(x') | \\
& \leq \gamma \max_{a \in \cA} \max_{x' \in \cX} | V_1(x') - V_2(x') | \\
& = \gamma \max_{x' \in \cX} | V_1(x') - V_2(x') |.
\end{align*}

Therefore, $\|\bellopsig V_1 - \bellopsig V_2\|_{\infty} \leq \gamma \|V_1 - V_2\|_{\infty}$ and $\bellopsig$ is a $\gamma$-contraction in the supremum norm. By Banach's fixed point theorem $V^{\pi_i}$ is its a unique fixed point.

We showed that each vertex $V^{\pi_i}$ of the value function polytope $\cV$ is the fixed point of an operator $\bellopsigi$. Since there are $2^n$ such operators, there are at most $2^n$ vertices.

\def \Vpimax {{V^\pi_{\textsc{max}}}}

\section{Proof of Theorem~\ref{thm:equivalent_optimization_problems}}

We will show that the maximization over $\SetPi$ is the same as the maximization over $\SetPi_v$.

Let $\Piphi$ be the projection matrix onto the hyperplane $H$ spanned by the basis induced by $\phi$. We write
\begin{align*}
\norm{\hVphipi - \Vpi}^2_2 &= \norm{ \Piphi \Vpi - \Vpi}^2_2 \\
&= \norm{(\Piphi - I) \Vpi}^2_2 \\
&= \Vpi^\top (\Piphi - I)^\top (\Piphi - I) \Vpi \\
&= \Vpi^\top (I - \Piphi) \Vpi
\end{align*}
because $\Piphi$ is idempotent. The eigenvalues of $A = I - \Piphi$ are 1 and 0, and the eigenvectors corresponding to eigenvalue 1 are normal to $H$. Because we are otherwise free to choose any basis spanning the subspace normal to $H$, there is a unit vector $\delta$ normal to $H$ for which 
\begin{align*}
\max_\pi \norm{\hVphipi - \Vpi}^2_2 &= \max_{V^\pi \in \cV} \norm{\hVphipi - \Vpi}^2_2 \\
&= \max_{V^\pi \in \cV} \Vpi^\top \delta \delta^\top \Vpi . 
\end{align*}
Denote the value function maximizing this quantity by $\Vpimax$. This $\delta$ can be chosen so that $\delta^\top \Vpimax > 0$ (if not, take $\delta' = -\delta$). Then $\Vpimax$ is also the maximizer of $f(V) := \delta^\top V$ over $\cV$, and Lemma \ref{lem:extremal_vertices} tells us that $\Vpimax$ is an extremal vertex.

\section{Proof of Theorem~\ref{thm:dual_delta_method}}

\def \cpi {C^\pi}
\def \Dpi {D^\pi}
\def \ppit {{Ppi}^\top}
\def \api {A^\pi}
\def \bpi {B^\pi}
\def \apit {{A^\pi}^\top}
\def \bpit {{B^\pi}^\top}
\def \cpit {{C^\pi}^\top}
\def \Dpit {{D^\pi}^\top}
\def \xs {x^*}
\def \ar {(I - \gamma \api)}
\def \art {(I - \gamma {A^\pi}^\top)}

To begin, note that
\begin{align*}
\delta^\top \Vpi &= \delta^\top (I - \gamma \Ppi)^{-1} r \\
&= (I - \gamma \Ppi^\top)^{-1} \delta^\top r \\
&= \dpi^\top r,
\end{align*}
as required.

Now, we choose an indexing for states in $\cS$ and will refer to states by their index.

Let $\pi$ be the policy maximizing $\delta^\top \Vpi$ and consider some $\xs \in \cS$. We assume without loss of generality that $\xs$ is the first state in the previous ordering. Recall that $n = |\cS|$.

The theorem states that policy $\pi$ chooses the highest-valued action at $\xs$ if $\dpi(\xs) > 0$, and the lowest-valued action if $\dpi(\xs) < 0$. Writing $P^\pi_{\xs} := P^\pi(\cdot \cbar \xs)$ for conciseness, this is equivalent to
\begin{equation*}
  r(\xs) + \expect_{x' \sim P^\pi_{\xs}} V^\pi(x') = \max_{\pi'} r(\xs) + \expect_{x' \sim P^{\pi'}_{\xs}} V^\pi(x'), 
\end{equation*}
for $\dpi(\xs) > 0$, and conversely with a $\min_{\pi'}$ for $\dpi(\xs) < 0$ (equivalently, $\cT^\pi V^\pi(\xs) \ge \cT^{\pi'} V^\pi(\xs)$ for all $\pi' \in \SetPi$ or $\cT^\pi V^\pi(\xs) \le \cT^{\pi'} V^\pi (\xs)$ in operator notation).

We write the transition matrix $\Ppi$ as follows
\begin{equation*}
    \Ppi = \begin{pmatrix}
           L^\pi_1 \\
           \vdots \\
           L^\pi_{n}
         \end{pmatrix}.
\end{equation*}
Where $L^\pi_i =\big( P^\pi(x_1 \cbar x_i), \cdots, P^\pi(x_{n} \cbar x_i) \big)$ is $\Ppi$'s $i$-th row.

Then we express the transition matrix as $\Ppi = \api + \bpi$, with $\api$ and $\bpi$ given by
\begin{equation*}
    \api = \begin{pmatrix}
           0 \\
           L^\pi_2 \\
           \vdots \\
           L^\pi_{n}
         \end{pmatrix} \quad \bpi = \begin{pmatrix}
           L^\pi_1 \\
           0 \\
           \vdots \\
           0
         \end{pmatrix}.
\end{equation*}

We can then write
\begin{align*}
    \Vpi &= r + \gamma \Ppi \Vpi \\
         &= r + \gamma (\api + \bpi) \Vpi  \\
   \Rightarrow \; \Vpi &= \ar^{-1} (r + \gamma \bpi \Vpi) .
\end{align*}
This is an application of matrix splitting \citep[e.g][]{puterman94markov}. The invertibility of $\ar$ is guaranteed because $\api$ is a substochastic matrix. 
The first term of the r.h.s corresponds to the expected sum of discounted rewards when following $\pi$ until reaching $\xs$, while the second term is the expected sum of discounted rewards received after leaving from $\xs$ and following policy $\pi$. 

Note that $(I - \gamma \api)^{-1}$ does not depend on $\pi(\cdot \cbar \xs)$ and that 
\begin{equation*}
    \bpi \Vpi =   \begin{pmatrix}
           \expect_{x' \sim P^\pi_{\xs}}  \Vpi (x') \\
           0 \\
           \vdots \\
          0
         \end{pmatrix}.
\end{equation*}
Write $\cpi = (I - \gamma \apit)^{-1} \delta$. We have
\begin{align*}
    \delta^\top \Vpi &= \delta^\top \ar^{-1} (r + \gamma \bpi \Vpi) \\
                  &= {\cpi}^\top (r + \gamma \bpi \Vpi) \\
                  &= {\cpi}^\top r + \cpi(\xs) \expect_{x' \sim P^\pi_{\xs}}  \Vpi (x').
\end{align*}
Now by assumption,
\begin{equation}\label{eqn:greater_than_vpi}
\delta^\top V^\pi \ge \delta^\top V^{\pi'}
\end{equation}
for any other policy $\pi' \in \SetPi$. Take $\pi'$ such that $\pi'(\cdot \cbar x) = \pi(\cdot \cbar x)$ everywhere but $\xs$; then $\cpi = C^{\pi'}$ and \eqnref{greater_than_vpi} implies that
\begin{equation*}
C^\pi(\xs) \expect_{x' \sim P^\pi_{\xs}} \Vpi(x') \ge C^\pi(\xs) \expect_{x' \sim P^{\pi'}_{\xs}} V^\pi(x') .
\end{equation*}
Hence $\pi$ must pick the maximizing action in $\xs$ if $C^\pi(\xs) > 0$, and the minimizing action if $C^\pi(\xs) < 0$. 

To conclude the proof, we show that $\dpi (\xs)$ and $\cpi (\xs)$ have the same sign. We write 
\begin{align*}
    \dpi &= \delta + \gamma (\apit + \bpit) \dpi.
\end{align*}
Then
\begin{align*}
\art \dpi &= \delta + \gamma \bpit \dpi \\
\Rightarrow \quad \dpi &= \cpi + \gamma \art^{-1} \bpit \dpi \\
         &= \sum_{k=0}^{\infty} (\gamma \art^{-1} \bpit)^k \cpi \\
         &= \sum_{k=0}^{\infty} \gamma^k ({\Dpi}^\top)^k \cpi. \\
\end{align*}
Where $\Dpi = \bpi \ar^{-1}$ is a matrix with non-negative components. Because $\bpi$ is sparse every row of $(\Dpi)^k$ is null except for the first one. We can write
\begin{equation*}
    ({\Dpi}^k)^\top = \begin{pmatrix}
           d_{11}^k \, 0 \cdots 0 \\
           \vdots  \\
           d_{1n}^k \, 0 \cdots 0 
         \end{pmatrix} \quad \forall i, \, d_{1i}^k \geq 0.
\end{equation*}
And
\begin{equation*}
   \dpi (\xs) =  \big( \sum_{k=0}^{\infty} \gamma^k d_{11}^k \big) \, \cpi (\xs).
\end{equation*}
Hence $\cpi (\xs)$ and $\dpi (\xs)$ have the same sign.

\section{Proof of Theorem~\ref{thm:optimal_representation_is_svd}}

We first transform \eqnref{relaxed_learning_problem} in a equivalent problem.
Let $V \in \bR^n$, and denote by $\hat V_\phi := \Piphi V$ the orthogonal projection of $V$ onto the subspace spanned by the columns of $\Phi$.
From Pythagoras' theorem we have, for any $V \in \bR^n$
\begin{align*}
\norm{V}_2^2 = \norm{\hat V_\phi - V}_2^2 + \norm{\hat V_\phi}_2^2 
\end{align*}
Then
\begin{align*}
\min_{\phi \in \SetPhi} \expect_{V \sim \xi} \norm{\hat V_\phi - V}_2^2 
&= \min_{\phi \in \SetPhi} \expect_{V \sim \xi} \big[ \norm{V}^2_2 - \norm{\Piphi V}^2_2 \big] \\
&= \max_{\phi \in \SetPhi} \expect_{V \sim \xi} \norm{\Piphi V}^2_2.
\end{align*}

Let $u_1^*, \dots, u_d^*$ the principal components defined in Theorem~\ref{thm:optimal_representation_is_svd}. These form an orthonormal basis. Hence $u_1^*, \dots, u_d^*$ is equivalently a solution of
\begin{align*}
\max_{\substack{u_1, \dots, u_d \in \bR^n \\ \text{orthonormal}}} \sum_{i=1}^{d} \expect_{V \sim \xi} (u_i^\top V)^2_2
&= \max_{\substack{u_1, \dots, u_d \in \bR^n \\ \text{orthonormal}}} \sum_{i=1}^{d} \expect_{V \sim \xi} \norm{u_i^\top V u_i}^2_2 \\
&= \max_{\substack{u_1, \dots, u_d \in \bR^n \\ \text{orthonormal}}} \expect_{V \sim \xi} \norm{ \sum_{i=1}^{d} u_i^\top V u_i}^2_2 \\
&= \max_{\substack{\Phi = [u_1, \dots, u_d] \\ \Phi^\top \Phi = I}} \expect_{V \sim \xi} \norm{ \Phi \Phi^T V}^2 \\
&= \max_{\phi \in \SetPhi} \expect_{V \sim \xi} \norm{\Piphi V}^2_2.
\end{align*}
Which gives the desired result.
The equivalence with the eigenvectors of $\expect_\xi V V^\top$ follows from writing 
\begin{align*}
\expect_{V \sim \xi} (u^\top V)^2_2 &= \expect_{V \sim \xi} u^\top V V^\top u \\
&= u^\top \expect_{V \sim \xi} \big [ V V^\top \big ] u 
\end{align*}
and appealing to a Rayleigh quotient argument, since we require $u^*_i$ to be of unit norm.

\section{The Optimization Problem \eqnref{representation_learning_problem} as a Quadratic Program}

\begin{prop}\label{prop:optimization}
The optimization problem \eqnref{representation_learning_problem} is equivalent to a quadratic program with quadratic constraints. 
\end{prop}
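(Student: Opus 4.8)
The plan is to convert the minimax \eqnref{representation_learning_problem} into a single minimization by two moves: (i) replace the representation $\phi$ by the orthogonal projection it induces, whose defining property is quadratic, and (ii) collapse the inner maximum onto a finite, fixed set of vectors so that an epigraph reformulation produces finitely many constraints.

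First I would recall, exactly as in the proof of Theorem \ref{thm:equivalent_optimization_problems}, that the objective depends on $\phi$ only through the projection $\Piphi$ onto $H = \mathrm{span}(\Phi)$, via $\norm{\hVphipi - \Vpi}_2^2 = \Vpi^\top (I - \Piphi) \Vpi$. Every $\phi \in \SetPhi$ induces an orthogonal projection of rank at most $d$, and conversely every such projection equals $\Piphi$ for a suitable $\Phi$ (pad an orthonormal basis of its range with zero columns), so minimizing over $\phi$ is the same as minimizing over
$$\mathcal{Q} := \{ \Pi \in \bR^{n \times n} : \Pi = \Pi^\top,\ \Pi^2 = \Pi,\ \mathrm{tr}(\Pi) \le d \}.$$
The idempotency $\Pi^2 = \Pi$ is a system of quadratic equalities in the entries of $\Pi$, while symmetry together with $\mathrm{tr}(\Pi) = \mathrm{rank}(\Pi)$ for symmetric idempotents encodes the rank-$d$ budget. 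I would also note that the optimum is attained at rank exactly $d$: enlarging a projection only decreases $V^\top(I - \Pi)V$ pointwise, so the bounds $\mathrm{tr}(\Pi) \le d$ and $\mathrm{tr}(\Pi) = d$ give the same optimal value.

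Next I would remove the inner maximum. For each fixed $\Pi \in \mathcal{Q}$, writing $\Pi = \Piphi$ for a suitable $\phi$, Theorem \ref{thm:equivalent_optimization_problems} gives $\max_{\pi \in \SetPi} \Vpi^\top (I - \Pi) \Vpi = \max_{\pi \in \SetPi_v} \Vpi^\top (I - \Pi) \Vpi$, so the inner maximum is attained on the finite set $\SetPi_v$ of extremal vertices, of which there are at most $2^n$ by Corollary \ref{cor:2n_distinct_avfs}. Writing $V_1, \dots, V_m$ for these (fixed, MDP-determined) vectors and introducing an epigraph variable $t$, the problem becomes
$$\min_{\Pi,\, t}\ t \quad \text{s.t.} \quad V_j^\top (I - \Pi) V_j \le t \ \ (j \le m),\quad \Pi = \Pi^\top,\quad \Pi^2 = \Pi,\quad \mathrm{tr}(\Pi) \le d.$$
The objective is linear (a fortiori quadratic), the $m$ error constraints are linear in $(\Pi, t)$ since the $V_j$ are constants, and the only nonlinearity is the quadratic idempotency constraint; hence this is a quadratically constrained quadratic program, as claimed.

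The main obstacle is the second move: justifying that the continuous inner maximum over all policies reduces to finitely many constraints built from vectors that do not depend on $\Pi$. This is precisely what Theorem \ref{thm:equivalent_optimization_problems} (underpinned by the convexity of $V \mapsto V^\top(I-\Pi)V$ used in Lemma \ref{lem:extremal_vertices}) delivers, so the real work is verifying that it applies uniformly in $\Pi$ — which it does, since each feasible $\Pi$ is some $\Piphi$, and the maximizing vertices $V_1,\dots,V_m$ are the same fixed vectors for every $\Pi$. A secondary care point is the reparametrization $\phi \mapsto \Pi$: one must check surjectivity of $\phi \mapsto \Piphi$ onto the rank-$\le d$ projections and that restricting to rank exactly $d$ leaves the optimal value unchanged.
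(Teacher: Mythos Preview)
Your argument is correct and follows the same two-step outline as the paper (finite reduction of the inner maximum, then epigraph reformulation), but the parametrization is genuinely different. The paper keeps $\Phi \in \bR^{n\times d}$ as the decision variable and imposes the orthonormality constraint $\Phi^\top \Phi = I$; with this constraint $\Piphi = \Phi\Phi^\top$, so the error constraints $\Vpi^\top(I - \Phi\Phi^\top)\Vpi \le t$ are quadratic in $\Phi$ and the orthonormality constraint is quadratic as well, giving a QCQP in $nd + 1$ variables. You instead lift to the projection matrix $\Pi$ itself, encoding the feasible set by the quadratic idempotency constraint $\Pi^2 = \Pi$ together with the linear constraints $\Pi = \Pi^\top$ and $\mathrm{tr}(\Pi) \le d$; in your formulation the error constraints become \emph{linear} in $(\Pi,t)$, and all the quadratic content sits in $\Pi^2 = \Pi$, at the cost of $O(n^2)$ variables. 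Both routes yield a QCQP, so the proposition holds either way. A minor difference: the paper reduces the inner maximum to the set $\SetPi_d$ of all deterministic policies, while you reduce to the smaller set $\SetPi_v$ of extremal vertices via Theorem~\ref{thm:equivalent_optimization_problems} and Corollary~\ref{cor:2n_distinct_avfs}; since $\SetPi_v \subseteq \SetPi_d \subseteq \SetPi$ and Theorem~\ref{thm:equivalent_optimization_problems} gives equality of the maxima, both choices are valid and simply change the number of epigraph constraints.
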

\begin{proof}
For completeness, let $n$, $d$ be the number of states and features, respectively. We consider representations $\Phi \in \bR^{n \times d}$. Recall that $\Piphi$ is the projection operator onto the subspace spanned by $\Phi$, that is
\begin{equation*}
\Piphi = \Phi \big (\Phi^\top \Phi\big)^{-1} \Phi^\top.
\end{equation*}
We will also write $\SetPi_d$ for the space of deterministic policies. We write \eqnref{representation_learning_problem} in epigraph form \citep{boyd04convex}:
\begin{align*}
\text{min. } \max_{\pi} \norm{\Piphi \Vpi - \Vpi}^2_2 \Leftrightarrow \\
\text{min. } \max_{\pi \in \SetPi_d} \norm{\Piphi \Vpi - \Vpi}^2_2 \Leftrightarrow \\
\text{min. } t \quad  \text{ s.t.} \norm{\Piphi \Vpi - \Vpi}^2_2 \le t \; \forall \pi \in \SetPi_d .
\end{align*}
The first equivalence comes from the fact that the extremal vertices of our polytope are achieved by deterministic policies.
The norm in the constraint can be written as
\begin{align*}
\norm{\Piphi \Vpi - \Vpi}^2_2 &= \norm{(\Piphi - I) \Vpi}^2_2 \\
&= \Vpi^\top (\Piphi - I)^\top (\Piphi - I) \Vpi \\
&= \Vpi^\top (\Piphi - I)^\top (\Piphi - I) \Vpi \\
&\overset{(a)}{=} \Vpi^\top (\Piphi^2 - 2 \Piphi + I) \Vpi \\
&\overset{(b)}{=} \Vpi^\top (I - \Piphi) \Vpi,
\end{align*}
where $(a)$ and $(b)$ follow from the idempotency of $\Piphi$. This is 
\begin{equation*}
\norm{\Piphi \Vpi - \Vpi}^2_2 = \Vpi^\top \big (I - \Phi (\Phi^\top \Phi)^{-1} \Phi^\top \big ) \Vpi .
\end{equation*}
To make the constraint quadratic, we further require that the representation be left-orthogonal: $\Phi^\top \Phi = I$. Hence the optimization problem \eqnref{representation_learning_problem} is equivalent to
\begin{equation*}
\text{minimize } \; t \quad \text{ s.t. }
\end{equation*}
\begin{equation*}
\qquad \Vpi^\top (I - \Phi \Phi^\top) \Vpi \le t \quad \forall \pi \in \SetPi_d
\end{equation*}
\begin{equation*}
\qquad \Phi^\top \Phi = I .
\end{equation*}
From inspection, these constraints are quadratic.
\end{proof}
However, there are an exponential number of deterministic policies and hence, an exponential number of constraints in our optimization problem. 

\section{NP-hardness of Finding AVFs}

\begin{prop}\label{prop:np-hard}
Finding $\max_{\pi \in \SetPi_d} \delta^\top V^\pi$ is NP-hard,
where the input is a deterministic MDP with binary-valued reward function, discount rate $\gamma = 1/2$ and $\delta : \cX \to \{-1/4,0,1\}$.
\end{prop}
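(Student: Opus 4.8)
The plan is to give a polynomial-time reduction from $3$-SAT to the decision version of the problem, ``is $\max_{\pi \in \SetPi_d} \delta^\top V^\pi \ge \theta$?'', which suffices for NP-hardness. Given a formula with variables $v_1, \dots, v_m$ and clauses $C_1, \dots, C_k$, I would construct a deterministic MDP whose states split into variable gadgets, literal-copy gadgets, and clause gadgets, and fix $\gamma = 1/2$ with binary rewards so that the only numbers appearing in any value are powers of $1/2$. For each variable $v_i$ I would introduce a single decision state $s_i$ with two actions, one leading through an absorbing reward-$1$ loop to a forced value and the other to a forced value $0$, so that $V^\pi(s_i) \in \{0,1\}$ encodes a truth assignment. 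Because a deterministic policy assigns exactly one action per state, each variable is automatically assigned a single bit.

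For each clause $C_j$ I would create a clause state $c_j$ with $\delta(c_j) = 1$ and no incoming transitions, so that the backward flow of Theorem~\ref{thm:dual_delta_method} gives $d_\pi(c_j) = \delta(c_j) > 0$ and $c_j$ behaves as a \emph{maximizing} state. Its three actions point to three local literal-copy states (one per literal occurrence), whose forced values lie in $\{0,1\}$, so that $V^\pi(c_j) = \tfrac12 \max(\text{three literal bits})$ equals $1/2$ if some literal is true and $0$ if all three are false. Consequently $\sum_j \delta(c_j) V^\pi(c_j) = \tfrac12 \cdot (\#\text{satisfied clauses})$, and the formula is satisfiable if and only if this quantity attains the threshold $\theta = k/2$. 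All quantities used are those allowed by the statement: a deterministic transition structure, binary rewards, $\gamma = 1/2$, and clause weights $\delta = 1$.

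The central difficulty -- and the reason the weight $-1/4$ is required -- is that the local literal-copy states are not tied to the variable assignments a priori: each copy chooses its own action, so a cheating policy could set a copy of $v_i$ and a copy of $\neg v_i$ both to $1$ and spuriously satisfy clauses. I would rule this out with consistency gadgets of weight $-1/4$ placed two transitions downstream, so that their discounted contribution carries the factor $(1/2)^2 = 1/4$; these are tuned so that any copy disagreeing with its variable's canonical bit $V^\pi(s_i)$ (respecting negation) incurs a penalty of $1/4$ that strictly exceeds the $1/2$ benefit obtainable from at most one extra satisfied clause per inconsistency, forcing every optimal policy to correspond to a genuine, consistent truth assignment.

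The bulk of the work is therefore arithmetic bookkeeping rather than a single clever idea: I would verify, using the $\gamma = 1/2$ and $\delta \in \{-1/4, 0, 1\}$ calibration, that the consistency penalties dominate any gain from cheating, that the resulting optimal objective is an affine function of the number of satisfiable clauses, and -- the most delicate point -- that the min/max characterization of Theorem~\ref{thm:dual_delta_method} resolves at each gadget in the intended direction given the policy-dependent flow $d_\pi$ (in particular that decision states remain effectively free and that clause states genuinely maximize). I expect this last verification, ensuring no pathological flow pattern lets a state flip between minimizer and maximizer in a way that breaks the encoding, to be the main obstacle. Polynomiality of the construction is immediate, so establishing these invariants completes the reduction and hence the NP-hardness claim.
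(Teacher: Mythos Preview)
Your proposal takes a genuinely different route from the paper---3-SAT rather than minimum set cover---but as written it has a concrete gap that breaks the reduction.

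The central problem is the arithmetic you state for the consistency gadgets. You write that a literal copy disagreeing with its variable's canonical bit ``incurs a penalty of $1/4$ that strictly exceeds the $1/2$ benefit obtainable from at most one extra satisfied clause per inconsistency.'' But $1/4 < 1/2$, so the inequality goes the wrong way: a cheating policy that flips one literal copy gains $1/2$ from the newly satisfied clause and loses only $1/4$ from the consistency gadget, for a net gain of $1/4$. Hence nothing forces the optimal policy to respect a consistent truth assignment, and the objective can exceed $k/2$ on unsatisfiable instances. Since the statement pins $\delta$ to $\{-1/4,0,1\}$, you cannot simply raise the penalty weight; you would need a structurally different gadget (e.g.\ several penalty states per literal copy, or a deeper chain to amplify the loss), and that redesign is not described.

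A second, smaller issue is your reliance on Theorem~\ref{thm:dual_delta_method} to argue that clause states ``maximize.'' That theorem characterizes optimal policies, but the flow $d_\pi$ is policy-dependent; to use it you must verify the sign of $d_\pi$ at \emph{every} decision state---including the literal copies and consistency gadgets, where incoming flow from negatively weighted states could flip signs. You flag this yourself as ``the main obstacle,'' which means the proof is incomplete precisely at its load-bearing step. By contrast, the paper's reduction from minimum set cover avoids both difficulties: it computes $\delta^\top V^\pi$ directly as $\tfrac{1}{2}\sum_{x\in\cX_1}\indic{\pi(\pi(x))=g}-\tfrac{1}{4}\sum_{x\in\cX_2}\indic{\pi(x)=g}$, from which one reads off immediately that the optimum equals $\tfrac{1}{2}(n-\tfrac{1}{2}|\pi(\cX_1)|)$ and that minimizing $|\pi(\cX_1)|$ is exactly minimum set cover. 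No structure theorem, no consistency gadgets, and the $-1/4$ weight is calibrated so that routing one more element through an already-used set always beats opening a new set ($1/2>1/4$) while opening an unused set is never free---the same $1/4$ vs.\ $1/2$ comparison, but working in the intended direction.
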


We use a reduction from the optimization version of minimum set cover, which is known to be NP-hard \citep[Corollary 15.24]{BV08}.
Let $n$ and $m$ be natural numbers. An instance of set cover 
is a collection of sets $\cC = \{C_1,\ldots,C_m\}$ where $C_i \subseteq [n] = \{1,2,\ldots,n\}$ for all $i \in [m]$. 
The minimum set cover problem is
\begin{align*}
\min_{\cJ \subseteq [m]} \left\{|\cJ| : \bigcup_{j \in \cJ} C_j = [n]\right\}\,. 
\end{align*}

Given a Markov decision process $\langle \cX, \cA, r, P, \gamma \rangle$ and function
$\delta : \cX \to [-1,1]$ define
\begin{align*}
R(\pi) = \sum_{x \in \cX} \delta(x) V^\pi(x)\,.
\end{align*}
We are interested in the optimization problem
\begin{align}
\max_{\pi \in \SetPi_d} R(\pi)\,.
\label{eq:hard-opt}
\end{align}
When $\delta(x) \geq 0$ for all $x$ this corresponds to finding the usual optimal policy, which can be found efficiently
using dynamic programming.
The propositions claims that more generally the problem is NP-hard. 

Consider an instance of set cover $\cC = \{C_1,\ldots,C_m\}$ over universe $[n]$ with $m > 1$.
Define a deterministic MDP $\langle \cX, \cA, r, P, \gamma \rangle$ with $\gamma = 1/2$ and $n + m + 2$ states and at most $m$ actions.
The state space is $\cX = \cX_1 \cup \cX_2 \cup \cX_3$ where
\begin{align*}
\cX_1 &= \{u_1,\ldots,u_n\} & \cX_2 &= \{v_1,\ldots,v_m\} & \cX_3 &= \{g, b\}\,.
\end{align*}
The reward function is $r(x) = \indic{x = g}$.
The transition function in a deterministic MDP is characterized by a function mapping states to the set of possible next states: 
\begin{align*}
N(x) = \bigcup_{a \in \cA} \{x' : P(x' \,|\, x,a) = 1\}\,. 
\end{align*}
We use $\cC$ to choose $P$ as a deterministic transition function for which 
\begin{align*}
N(x) = 
\begin{cases}
\{x\} & \text{if } x \in \cX_3 \\
\{g, b\} & \text{if } x \in \cX_2 \\
\{v_j : i \in C_j\} & \text{if } x = u_i \in \cX_1\,.
\end{cases}
\end{align*}
This means the states in $\cX_3$ are self transitioning and states in $\cX_2$ have transitions leading to either state in $\cX_3$.
States in $\cX_1$ transition to states in $\cX_2$ in a way that depends on the set cover instance.
The situation is illustrated in Figure~\ref{fig:set-cover}.
Since both policies and the MDP are deterministic, we can represent a policy as a function $\pi : \cX \to \cX$ for which $\pi(x) \in N(x)$ for all $x \in \cX$.
To see the connection to set cover, notice that 
\begin{align}
\bigcup_{v_j \in \pi(\cX_1)} C_j = [n]\,,
\label{eq:set-cover-connection}
\end{align}
where $\pi(\cX_1) = \{\pi(x) : x \in \cX_1\}$. 
Define 
\begin{align*}
\delta(x) = \begin{cases}
1 & \text{if } x \in \cX_1 \\
-1/4 & \text{if } x \in \cX_2 \\
0 & \text{if } x \in \cX_3\,.
\end{cases}
\end{align*}
Using the definition of the value function and MDP,
\begin{align*}
R(\pi) 
&= \sum_{x \in \cX} \delta(x) V^\pi(x) \\
&= \sum_{x \in \cX_1} V^\pi(x) - \frac{1}{4} \sum_{x \in \cX_2} V^\pi(x) \\
&= \sum_{x \in \cX_1} V^\pi(x) - \frac{1}{4} \sum_{x \in \cX_2} \indic{\pi(x) = g} \\
&= \frac{1}{2} \sum_{x \in \cX_1} \indic{\pi(\pi(x)) = g} - \frac{1}{4} \sum_{x \in \cX_2} \indic{\pi(x) = g}\,. 
\end{align*}
The decomposition shows that any policy maximizing (\ref{eq:hard-opt}) must satisfy
$\pi(\pi(\cX_1)) = \{g\}$ and $\pi(\cX_2 \setminus \pi(\cX_1)) = \{b\}$ and for such policies
\begin{align*}
R(\pi) &= \frac{1}{2} \left(n - \frac{1}{2} |\pi(\cX_1)|\right)\,.
\end{align*}
In other words, a policy maximizing (\ref{eq:hard-opt}) minimizes $|\pi(\cX_1)|$, which by (\ref{eq:set-cover-connection})
corresponds to finding a minimum set cover.
Rearranging shows that
\begin{align*}
\min_{\cJ \subseteq [m]} \left\{|\cJ| : \bigcup_{j \in \cJ} C_j = [n]\right\}
= 2n - 4\max_{\pi \in \SetPi_d} R(\pi) \,.
\end{align*}
The result follows by noting this reduction is clearly polynomial time.

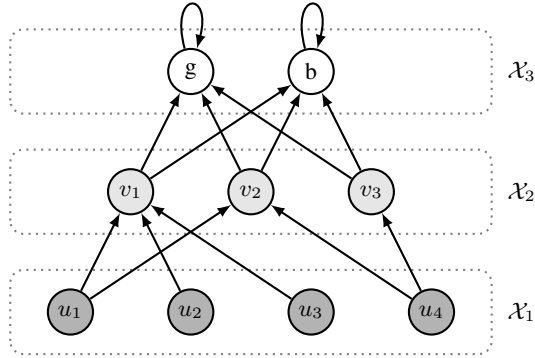
\begin{figure}[h]
\centering
\tikzset{>=latex}
\begin{tikzpicture}[thick,scale=0.8,font=\small]
\tikzstyle{n1} = [circle,fill=black!30!white,draw,inner sep=2pt,minimum width=0.6cm]
\tikzstyle{n2} = [circle,fill=black!10!white,draw,inner sep=2pt,minimum width=0.6cm]
\tikzstyle{n3} = [circle,draw,inner sep=2pt,minimum width=0.6cm]
\tikzstyle{b} = [dotted,rounded corners=5pt,line width=0.8pt,gray]

\draw[b] (-1,-0.7) rectangle (7,0.7);
\draw[b] (-1,1.3) rectangle (7,2.7);
\draw[b] (-1,3.3) rectangle (7,4.7);
\node[anchor=west] at (7.1,0) {$\cX_1$};
\node[anchor=west] at (7.1,2) {$\cX_2$};
\node[anchor=west] at (7.1,4) {$\cX_3$};

\node[n1] (1) at (0,0) {$u_1$};
\node[n1] (2) at (2,0) {$u_2$};
\node[n1] (3) at (4,0) {$u_3$};
\node[n1] (4) at (6,0) {$u_4$};
\node[n2] (5) at (1,2) {$v_1$};
\node[n2] (6) at (3,2) {$v_2$};
\node[n2] (7) at (5,2) {$v_3$};
\node[n3] (g) at (2,4) {g};
\node[n3] (b) at (4,4) {b};
\draw[-latex] (g) edge[loop above,min distance=1cm] node {} (g);
\draw[-latex] (b) edge[loop above,min distance=1cm] node {} (b);
\draw[-latex] (1) -- (5);
\draw[-latex] (1) -- (6);
\draw[-latex] (2) -- (5);
\draw[-latex] (3) -- (5);
\draw[-latex] (4) -- (7);
\draw[-latex] (4) -- (6);
\draw[-latex] (5) -- (g);
\draw[-latex] (6) -- (g);
\draw[-latex] (7) -- (g);
\draw[-latex] (5) -- (b);
\draw[-latex] (6) -- (b);
\draw[-latex] (7) -- (b);
\end{tikzpicture}
\caption{The challenging MDP given set cover problem $\{\{1,2,3\}, \{1,4\}, \{4\}\}$. 
State $g$ gives a reward of $1$ and all other states give reward $0$.
The optimal policy is to find the smallest subset of the middle layer such
that for every state in the bottom layer there exists a transition to the subset.}\label{fig:set-cover}
\end{figure}

\section{Empirical Studies: Methodology}\label{sec:empirical_studies_methodology}

\subsection{Four-room Domain}\label{sec:four_room_domain}

The four-room domain consists of 104 discrete states arranged into four ``rooms''. There are four actions available to the agent, transitioning deterministically from one square to the next; when attempting to move into a wall, the agent remains in place. In our experiments, the top right state is a goal state, yielding a reward of 1 and terminating the episode; all other transitions have 0 reward.
\subsection{Learning $\phi$}

Our representation $\phi$ consists of a single hidden layer of 512 rectified linear units (ReLUs) followed by a layer of $d$ ReLUs which form our learned features. The use of ReLUs has an interesting side effect that all features are nonnegative, but other experiments with linear transforms yielded qualitatively similar results. The input is a one-hot encoding of the state (a 104-dimensional vector). All layers (and generally speaking, experiments) also included a bias unit.

The representation was learned using standard deep reinforcement learning tools taken from the Dopamine framework \citep{castro18dopamine}. Our loss function is the mean squared loss w.r.t. the targets, i.e. the AVFs or the usual value function. The losses were then trained using RMSProp with a step size of 0.00025 (the default optimizer from Dopamine), for 200,000 training updates each over a minibatch of size 32; empirically, we found our results robust to small changes in step sizes.

In our experiments we optimize both parts of the two-part approximation defined by $\phi$ and $\theta$ simultaneously, with each prediction made as a linear combination of features $\phi(x)^\top \theta_i$ and replacing $\tilde L(\phi; \bmu)$ from \eqnref{auxiliary_tasks_loss} with a sample-based estimate. This leads to a slightly different optimization procedure but with similar representational characteristics. 

\subsection{Implementation Details: Proto-Value Functions}\label{sec:pvf_implementation}

Our \textsc{pvf} representation consists in the top $k$ left-singular vectors of the successor representation $(I - \gamma \Ppi)^{-1}$ for $\pi$ the uniformly random policy, as suggested by \citet{machado18eigenoption,behzadian18feature}. See Figure \ref{fig:proto-value-functions} for an illustration.

\subsection{Learning AVFs}

The AVFs were learned from 1000 policy gradient steps, which were in general sufficient for convergence to an almost-deterministic policy. This policy gradient scheme was defined by directly writing the matrix $(I - \gamma P^\pi)^{-1}$ as a Tensorflow op \citep{abadi16tensorflow} and minimizing $-\delta^\top(I - \gamma P^\pi)^{-1} r$ w.r.t. $\pi$. We did not use an entropy penalty. In this case, there is no approximation: the AVF policies are directly represented as matrices of parameters of softmax policies.

\subsection{SARSA}

In early experiments we found LSPI and fitted value iteration to be somewhat unstable and eventually converged on a relatively robust, model-based variant of SARSA.

In all cases, we define the following dynamics. We maintain an occupancy vector $d$ over the state space. At each time step we update this occupancy vector by applying one transition in the environment according to the current policy $\pi$, but also mix in a probability of resetting to a state uniformly at random in the environment:
\begin{equation*}
d = 0.99 d \Ppi + 0.01 \text{Unif}(\cX)
\end{equation*}
The policy itself is an $\epsilon$-greedy policy according to the current $Q$-function, with $\epsilon=0.1$. 

We update the $Q$-function using a semi-gradient update rule based on expected SARSA \citep{sutton98reinforcement}, but where we simultaneously compute updates across all states and weight them according to the occupancy $d$. We use a common step size of 0.01 but premultiplied the updates by the pseudoinverse of $\Phi^\top \Phi$ to deal with variable feature shapes across methods. This process was applied for 50,000 training steps, after which we report performance as the average value and/or number of steps to goal for the 10 last recorded policies (at intervals of 100 steps each).

Overall, we found this learning scheme to reduce experimental variance and to be robust to off-policy divergence, which we otherwise observed in a number of experiments involving value-only representations.

\section{Representations as Principal Components of Sets of Value Functions}\label{sec:representations_from_svd}

In the main text we focused on the use of value functions as auxiliary tasks, which are combined into the representation loss \eqnref{auxiliary_tasks_loss}. However, Section \ref{sec:relationship_to_auxiliary_tasks} shows that doing so is equivalent (in intent) to computing the principal components of a particular set of value functions, where each ``column'' corresponds to a particular auxiliary task.

In Figure \ref{fig:representations-from-svd} we show the representations generated from this process, using different sets of value functions. For completeness, we consider:
\begin{itemize}
	\item 1000 AVFs,
	\item 1000 random deterministic policies (RDPs),
	\item 1000 random stochastic policies (RSPs), and
	\item The 104 rows of the successor matrix (corresponding to proto-value functions).
\end{itemize}
As with principal component analysis, the per-state feature activations are determined up to a signed scalar; we pick the vector which has more positive components than negative. In all but the PVF case, we sample a subset of the many possible value functions within a particular set. Figure \ref{fig:variable_number_of_avfs} shows that the AVF approach is relatively robust to the sample size.

The AVFs are sampled using Algorithm \ref{alg:AVF_representation_learning}, i.e. by sampling a random interest function $\delta \in [-1, 1]^n$ and using policy gradient on a softmax policy to find the corresponding value function. The random policies were generated by randomly initializing the same softmax policy and using them as-is (RSPs) or multiplying the logits by 1e6 (RDPs).

\begin{figure*}[htb]
\center{
\includegraphics[width=5.7in]{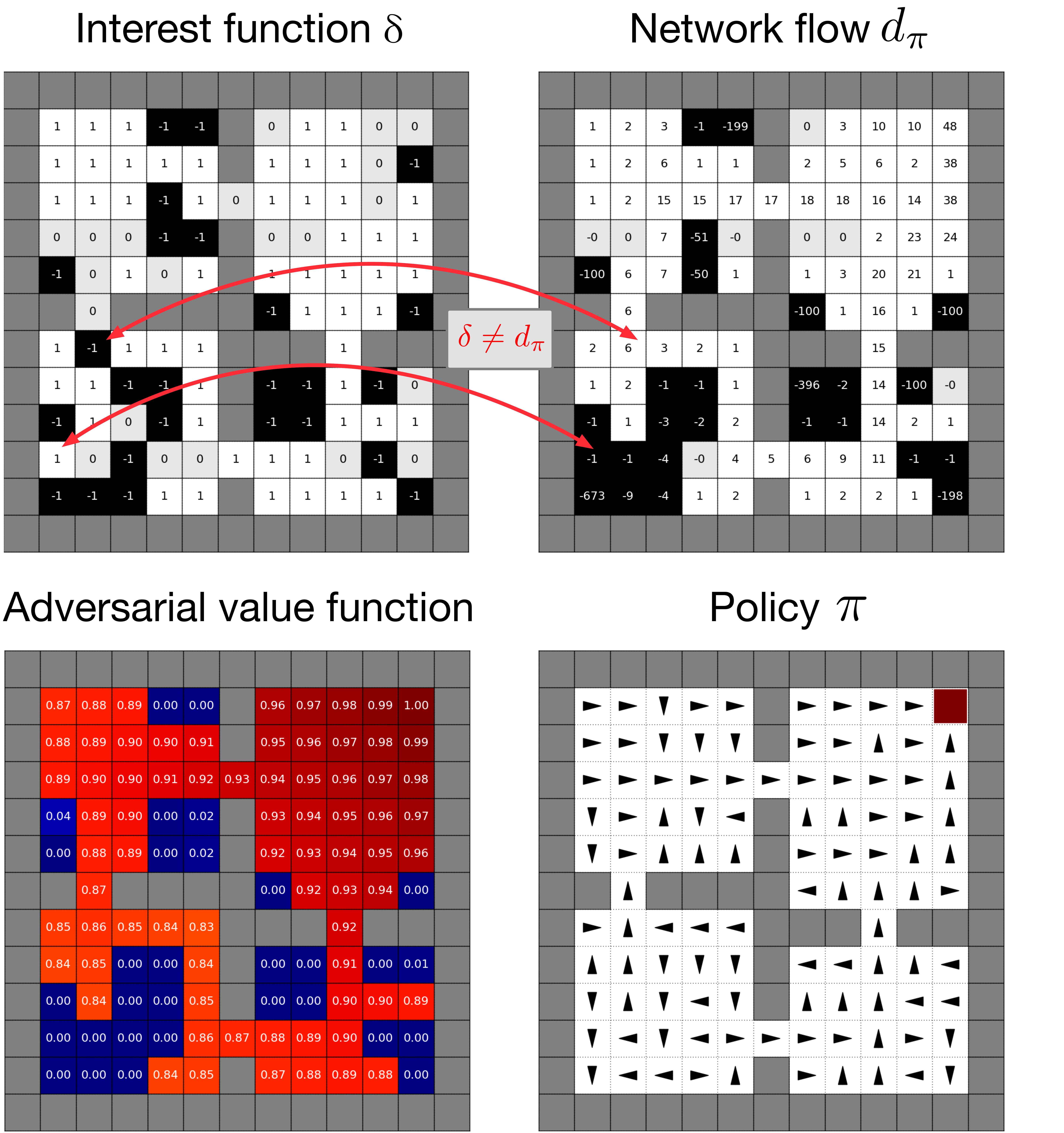}
\caption{Figure \ref{fig:alignment-and-AVF}, enlarged. Red arrows highlight states where $\delta$ and $\dpi$ have opposite signs.\label{fig:alignment-and-AVF-big}}
}
\end{figure*}

\begin{figure*}[htb]
\center{
\includegraphics[width=5.7in]{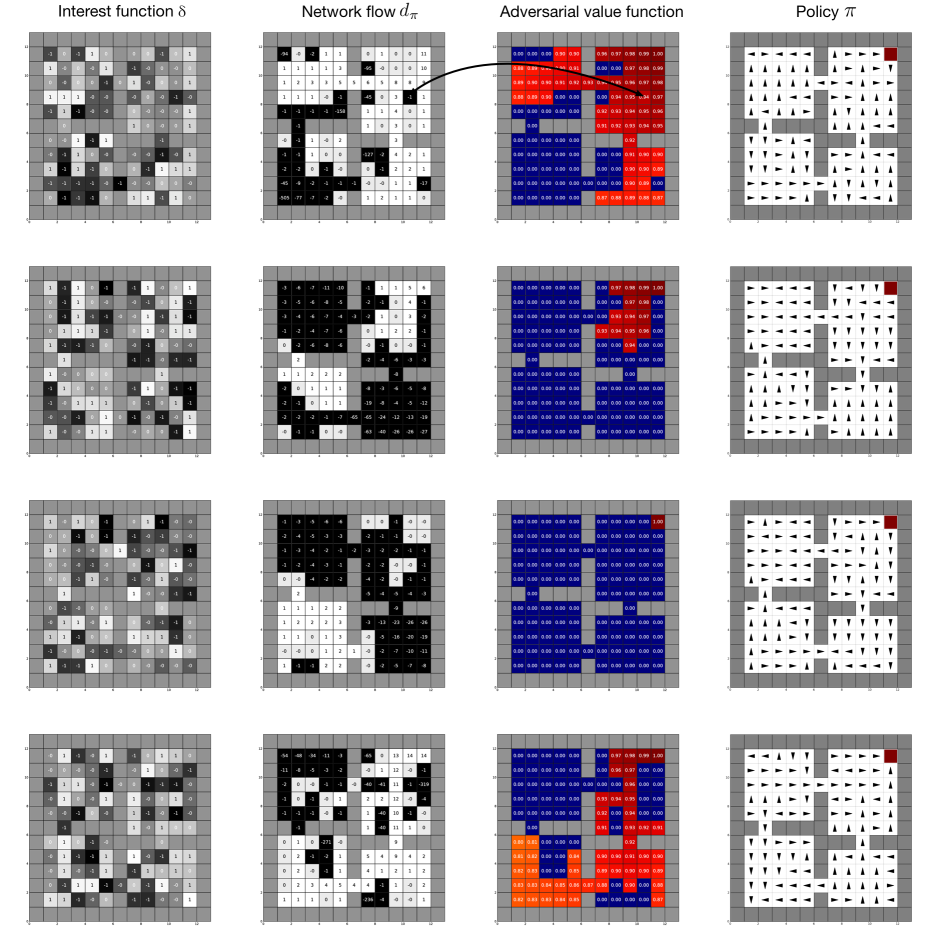}
\caption{Four interest functions sampled from $\{-1, 1\}^n$, along with their corresponding flow $\dpi$, adversarial value function, and corresponding policy. The top example was chosen to illustrate a scenario where $\dpi(x) < 0$ but $V^\pi(x) > 0$; the other three were selected at random. %
In our experiments, sampling from $[-1, 1]^n$ yielded qualitatively similar results.\label{fig:more-interest-functions}}
}
\end{figure*}

\begin{figure*}[htb]
\center{
\includegraphics[width=5.7in]{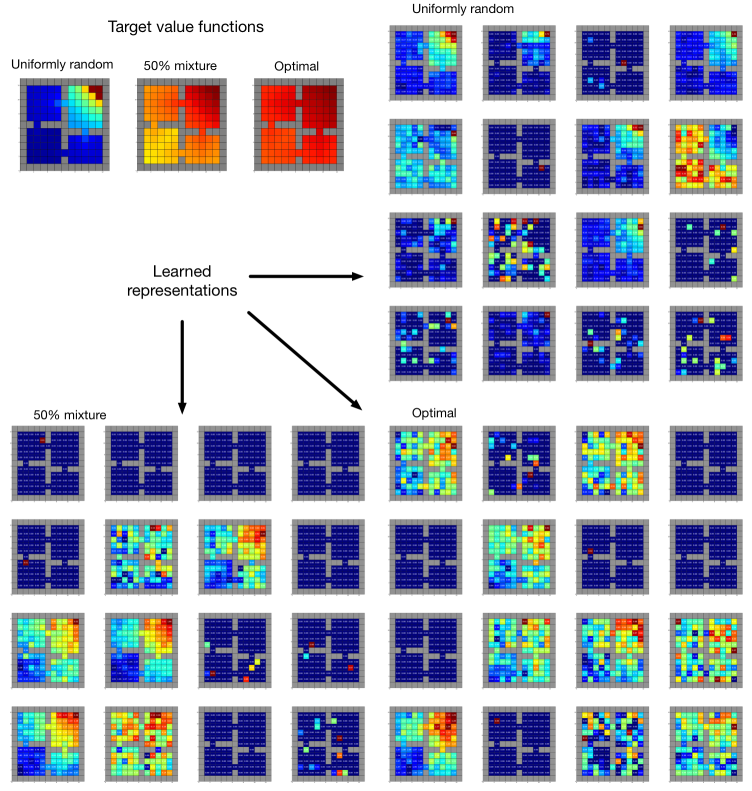}
\caption{16-dimensional representations learned by training a deep network to predict the value function of a single policy, namely: the uniformly random policy, the optimal policy, and a convex combination of the two in equal proportions. \label{fig:learned-value-representations}}
}
\end{figure*}

\begin{figure*}[htb]
\center{
\includegraphics[width=5.7in]{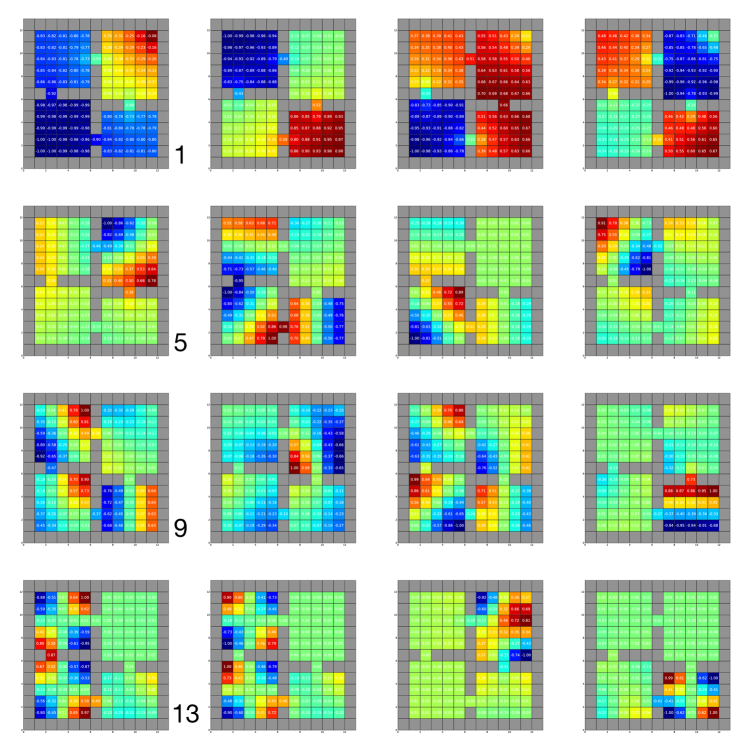}
\caption{16-dimensional representation generated by the proto-value function method \citep{mahadevan07proto} applied to left-singular vectors of the transition function corresponding to the uniformly random policy. The top-left feature, labelled '1', corresponds to the second largest singular value. Notice the asymmetries arising from the absorbing goal state and the walls.\label{fig:proto-value-functions}}
}
\end{figure*}

\begin{figure*}[htb]
\center{
\includegraphics[width=5.7in]{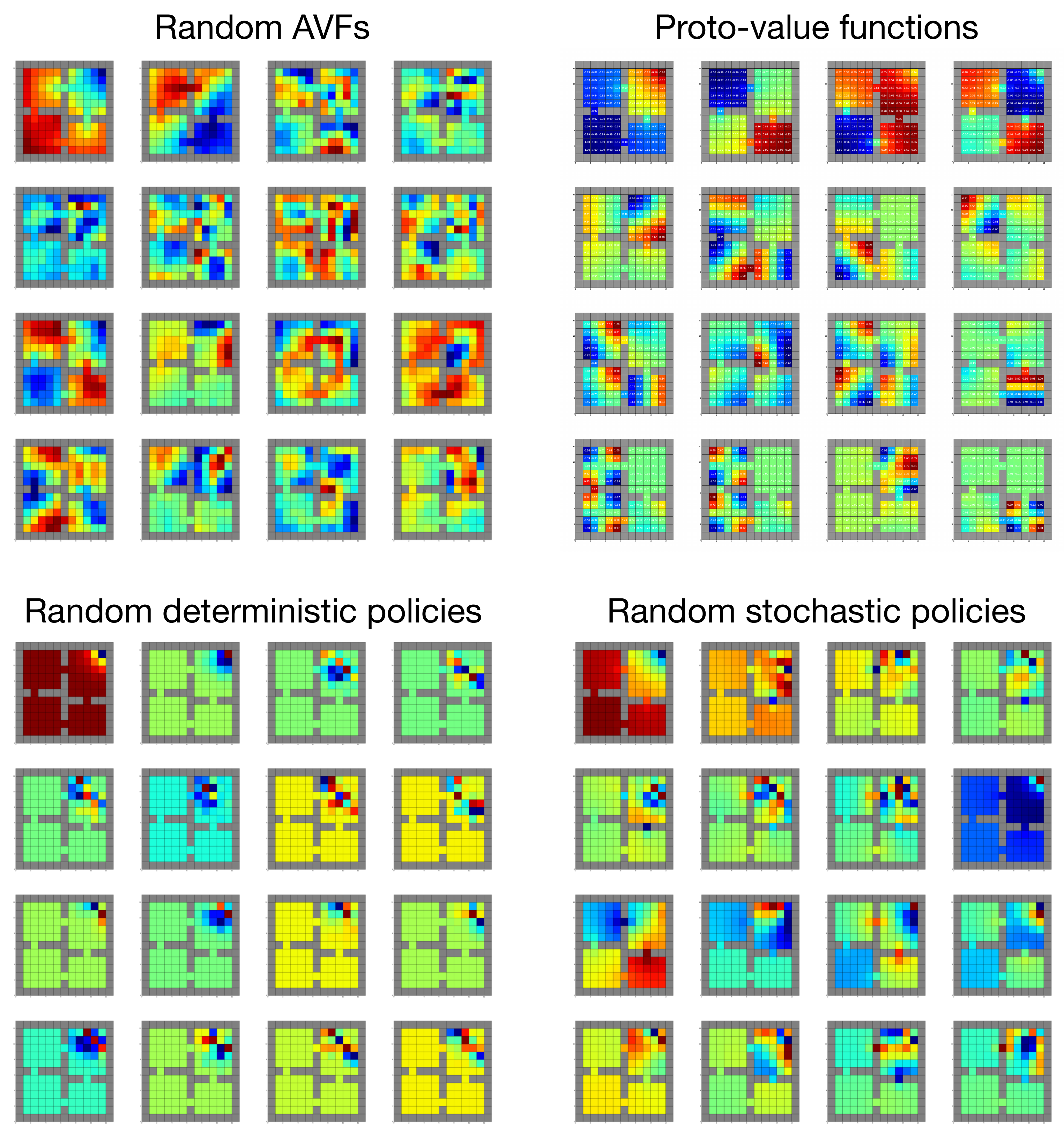}
\caption{16-dimensional representations generated from the principal components of different sets of value functions. Beginning in the top-left corner, in clockwise order: from $k=1000$ AVFs sampled according as in \ref{alg:AVF_representation_learning}; proto-value functions (\ref{fig:proto-value-functions}); from $k=1000$ random deterministic policies (RDPs); and finally from $k=1000$ random stochastic policies. Of the four, only PVFs and AVFs capture the long-range structure of the four-room domain.\label{fig:representations-from-svd}}
}
\end{figure*}

\begin{figure*}[htb]
\center{
\includegraphics[width=5.7in]{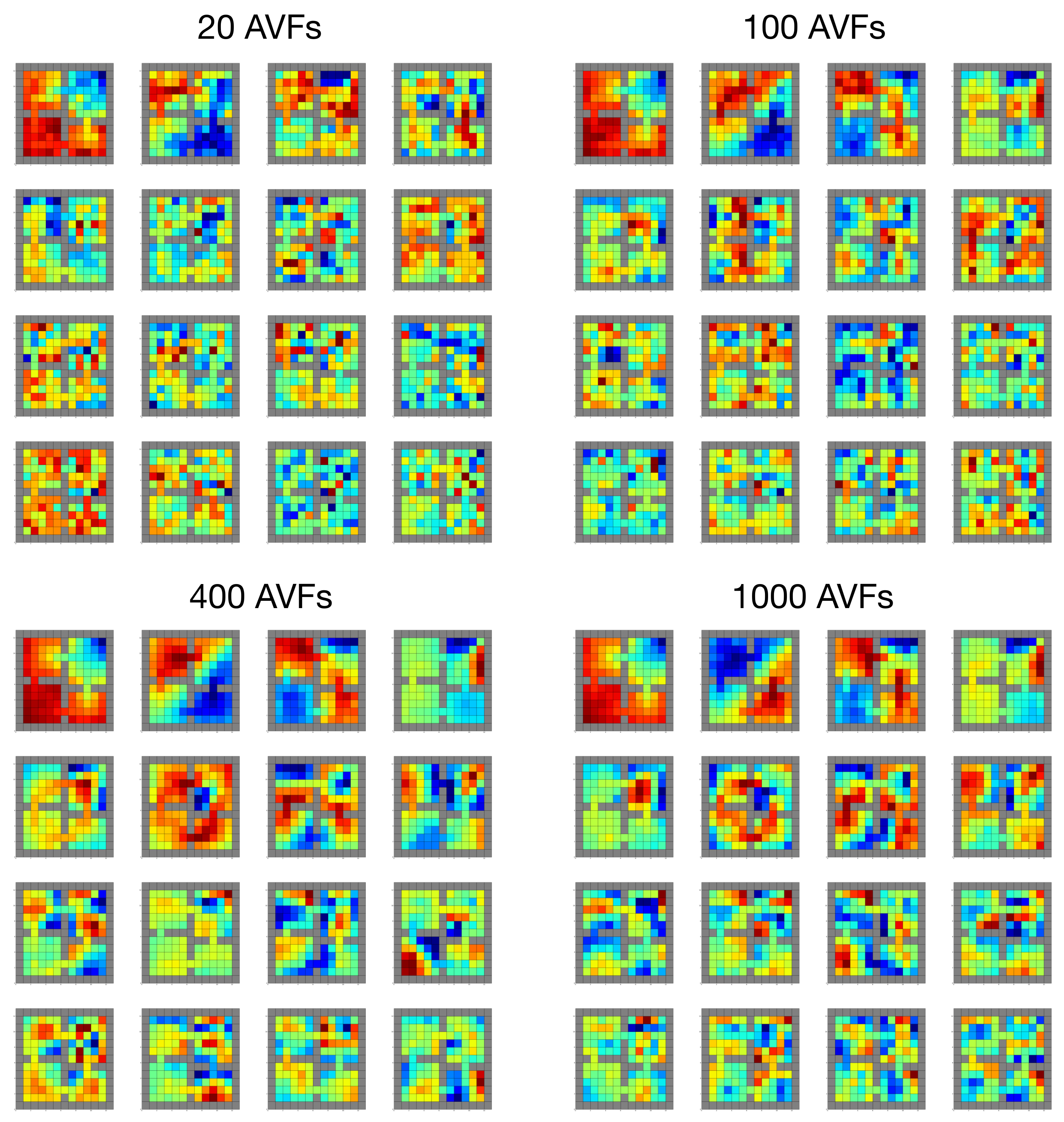}
\caption{16-dimensional representations generated from the principal components of sets of AVFs of varying sizes ($k=20, 100, 400, 1000$). To minimize visualization variance, each set of AVFs contains the previous one. The accompanying video at \url{https://www.youtube.com/watch?v=q_XG7GhImQQ} shows the full progress from $k=16$ to $k=1024$.\label{fig:variable_number_of_avfs}}
}
\end{figure*}

\begin{figure*}[htb]
\center{
\includegraphics[width=5.7in]{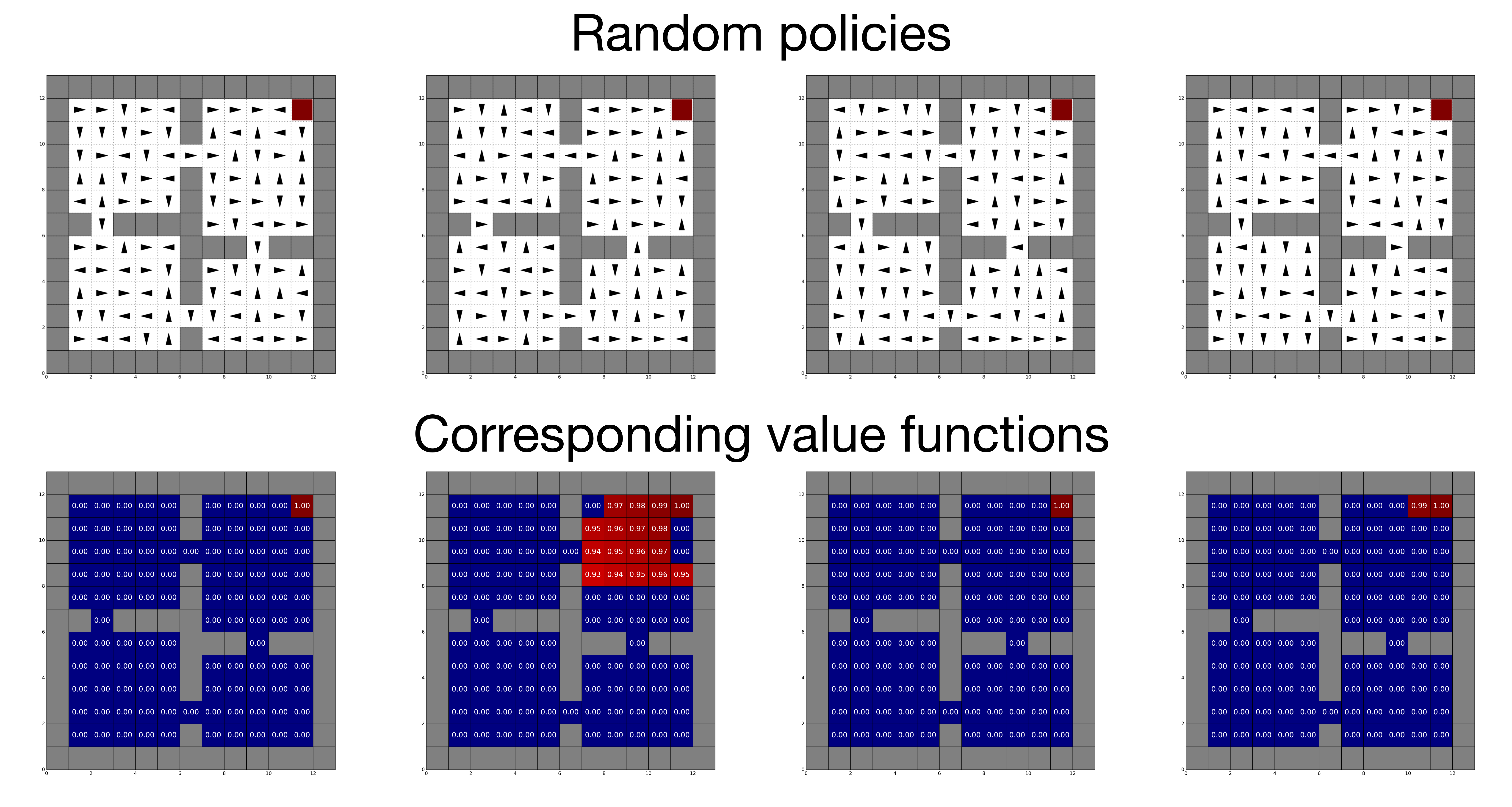}
\caption{A sample of random deterministic policies, together with their corresponding value functions. These policies are generated by assigning a random action to each state. Under this sampling scheme, it is unlikely for a long chain of actions to reach the goal, leading to the corresponding value functions being zero almost everywhere.\label{fig:random-policies-vfs}}
}
\end{figure*}

\begin{figure*}[htb]
\center{
\includegraphics[width=\textwidth]{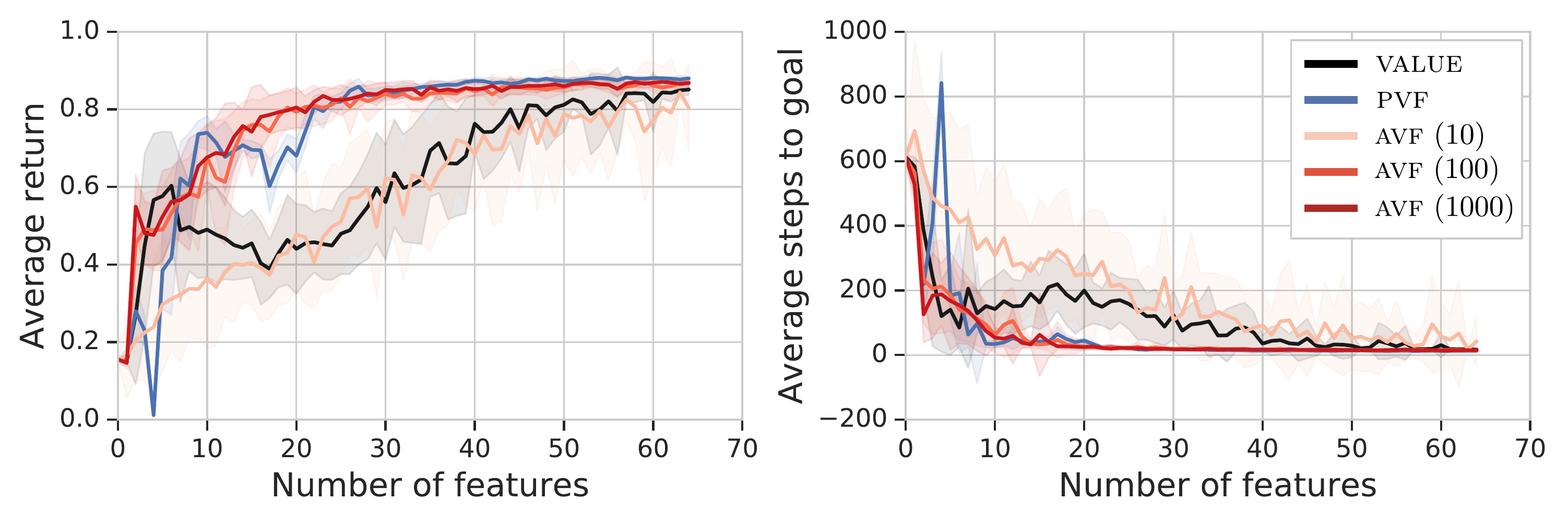}
\caption{Average return (left) and average steps to goal (right), achieved by policies learned using a representation, with given number of features, produced by $\textsc{value}$, $\textsc{avf}$, or $\textsc{pvf}$. Average is over all states and $20$ random seeds, and shading gives standard deviation.\label{fig:sarsa_both}}
}
\end{figure*}

\end{document}